\begin{document}
\title{GraphSubDetector: Time Series Subsequence Anomaly Detection via Density-Aware Adaptive Graph Neural Network}



\author{Weiqi~Chen, Zhiqiang~Zhou, 
        Qingsong~Wen, and 
        Liang~Sun}
\affiliation{
  \institution{DAMO Academy, Alibaba Group}\city{Hangzhou}\country{China}
}
\email{{jarvus.cwq,zhouzhiqiang.zzq,qingsong.wen,liang.sun}@alibaba-inc.com}

\renewcommand{\shortauthors}{Chen et al.}

\begin{abstract}
Time series subsequence anomaly detection is an important task in a large variety of real-world applications ranging from health monitoring to AIOps, and is challenging due to the following reasons: 1) how to effectively learn complex dynamics and dependencies in time series; 2) diverse and complicated anomalous subsequences as well as the inherent variance and noise of normal patterns; 3) how to determine the proper subsequence length for effective detection, which is a required parameter for many existing algorithms. 
In this paper, we present a novel approach to subsequence anomaly detection, namely GraphSubDetector. First, it adaptively learns the appropriate subsequence length with a length selection mechanism that highlights the characteristics of both normal and anomalous patterns. Second, we propose a density-aware adaptive graph neural network (DAGNN), which can generate further robust representations against variance of normal data for anomaly detection by message passing between subsequences.
The experimental results demonstrate the effectiveness of the proposed algorithm, which achieves superior performance on multiple time series anomaly benchmark datasets compared to state-of-the-art algorithms. 
\end{abstract}

\begin{CCSXML}
<ccs2012>
   <concept>
       <concept_id>10010147.10010257.10010293.10010294</concept_id>
       <concept_desc>Computing methodologies~Neural networks</concept_desc>
       <concept_significance>300</concept_significance>
       </concept>
 </ccs2012>
\end{CCSXML}

\ccsdesc[300]{Computing methodologies~Neural networks}

\keywords{Time series, anomaly detection, subsequence, graph neural network}


\maketitle

\section{Introduction} \label{sec:intro}

{D}{etecting} anomalies in time series data has a wide variety of practical applications~\cite{wen2022robust}, such as tracing patients' biosignals for disease detection~\cite{chauhan2015anomaly}, monitoring operational data of cloud infrastructure for malfunction location~\cite{zhang2021cloudrca}, finding risks in IoT sensing time series~\cite{cook2019anomaly}, etc. It has received a great amount of research interest in the literature~\cite{li2020anomaly,hot:sax:2005, discord:2007, boniol13series2graph, shen2020timeseries,gao2020robusttad, matrix:profile:kdd22}. 
Most time series anomaly detection (TSAD) algorithms try to locate anomalies at each point of the time series (namely point-wise TSAD). However, this formulation fails to consider temporal relationships of anomalous points, as anomalies can go beyond occurring point by point but tend to exist consecutively over a time interval in many real-world scenarios. For example, some demand patterns of the electric power system change during the holidays. Figure~\ref{fig:point vs subseq} shows a comparison of point-wise anomalies and subsequence anomalies. In this paper, we investigate TSAD from a subsequence perspective by identifying anomalous patterns in a time interval, which is called time series subsequence anomaly detection. Generally speaking, a subsequence anomaly is a sequence of observations that deviates considerably from some concept of normality. The somewhat ``vague'' definition itself also hints the challenges of the subsequence anomaly detection problem.

\begin{figure*}
  \begin{minipage}[h!]{0.49\linewidth}
    \centering
    \includegraphics[width=1.\textwidth]{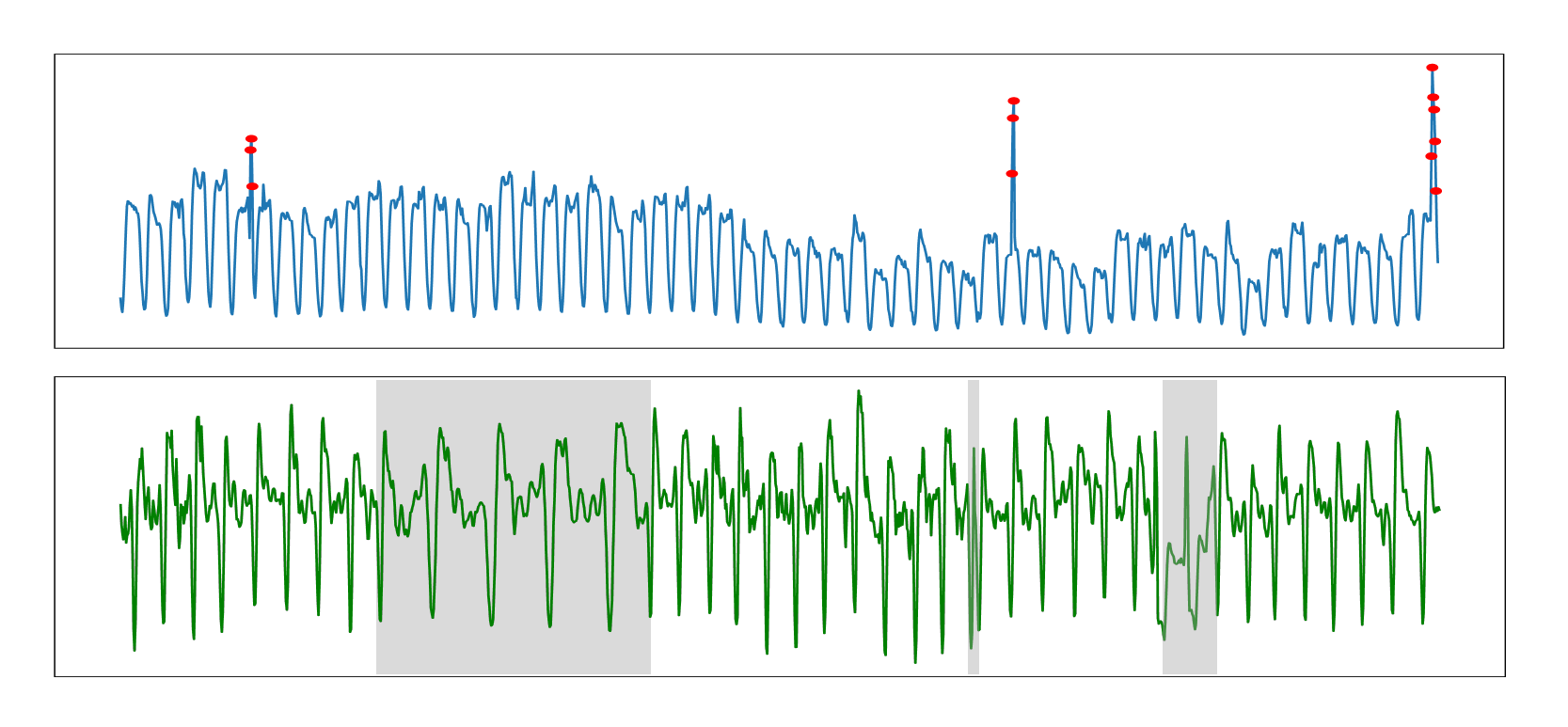}\vspace{-1mm}
    \caption{\small{Point-wise anomalies (top) versus subsequence anomalies (bottom). The top is a website traffic time series with anomalies labeled by red dots that might be caused by cyberattacks. The bottom is an insect's activity signal recorded with an EPG apparatus, where time intervals marked in grey are subsequences exhibiting different anomalous characteristics, including period length variation, spike, and temporal morphological change. Patterns and duration of anomalies vary. }}
    \label{fig:point vs subseq}
  \end{minipage}
  \hspace{2mm}
  \begin{minipage}[h!]{0.49\linewidth}
    \centering
    \includegraphics[width=1.\textwidth]{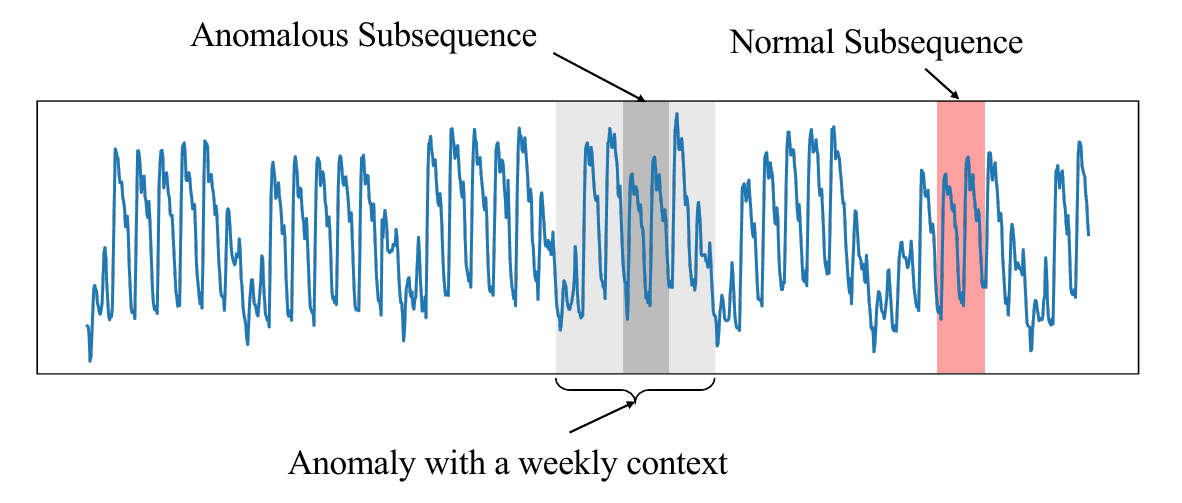}\vspace{-1mm}
    \caption{\small{An illustration of the difficulty in selecting proper subsequence length for subsequence anomaly detection. This figure shows an electricity consumption time series with both daily and weekly periods, and a 2-day anomalous subsequence that might be caused by power rationing inside the dark grey zone. If we directly detect anomalies using this length, the anomaly might not be found as it is very similar to normal subsequences, e.g., the green zone. Instead, it is better to select a longer length of a week (light grey), including the anomaly with its context to highlight the anomalous trend change.}}
    \label{fig:subseq_anomaly}
  \end{minipage}
\end{figure*}



Early research on anomaly detection relies mainly on shallow representations~\cite{breunig2000lof, scholkopf2001estimating, tax2004support}. Later, Deep-SVDD~\cite{ruff2018deep} enhances the representation learning capability using neural networks. Recently, TSAD methods ~\cite{ncad_ijcai22,shen2020timeseries} based on Deep-SVDD are prevailing due to their excellent performance. They introduce a suitable neural architecture for modeling time series and detecting anomalies by computing the distance between a target and its corresponding reference in the latent representation space, where the reference represents normal patterns. The main challenge is that these deep models are difficult to enforce assumptions of anomalies, and typically require large training datasets to achieve accurate results.
In contrast, time series discord~\cite{hot:sax:2005, yeh2016matrix, nakamura2020merlin, matrix:profile:kdd22} is another popular category of distance-based TSAD methods. Discords are subsequences that are maximally different from all others in the time series, where the difference is measured via zero-normalized Euclidean distance.
The most appealing merit of discords is that anomalies can be discovered by merely examining the test data without a training phase. Despite (or perhaps because of their extremely simple assumptions, discords are competitive with deep learning methods. However, several important limitations still prevent them from broader applications. First, they fail to detect similar anomalous patterns recurring at least twice, as each occurrence will be the others' nearest neighbor. Second, they rely on an explicit distance measure (z-normalized Euclidean distance), which cannot account for diversified anomalies flexibly and adaptively.

 

We summarize three major challenges in developing an effective and robust subsequence anomaly detector as follows. 

\noindent\textbf{Capturing the temporal dependency from time series data.} \hspace{1mm} A distinguishing feature of time series is the complex underlying dynamics and temporal dependencies. A temporal anomaly highly depends on the ``context", which should incorporate temporal dynamics and dependencies into consideration. 
Thus, how to learn and utilize temporal dependency is a key challenge in time series subsequence anomaly detection.

\noindent\textbf{Appropriate subsequence length selection.} \hspace{1mm} Another key challenge is how to determine the appropriate subsequence length in subsequence anomaly detection. Here we emphasize the importance of the appropriate length which highlights the normal or anomalous pattern of a subsequence. On the one hand, the duration of anomalies varies. For example, if we use a large window to detect spikes and dips in Figure~\ref{fig:point vs subseq}, the anomalies might be ``hidden'' in the normal data. While for a long-term anomaly, a short window cannot depict the full picture of it. On the other hand, even if we have a prior anomaly length, it is still necessary to intelligently infer a suitable length according to data characteristics, as illustrated in Figure \ref{fig:subseq_anomaly}. This problem becomes even worse when there are multiple abnormal subsequences with different lengths in one series. To the best of our knowledge, most existing methods utilize a predefined window length to detect anomalies and thus cannot intelligently detect anomalous subsequences with different lengths and characteristics. 


\noindent\textbf{Robust representations against inherent normal variance.} \hspace{1mm} Different time series data arising from different applications may exhibit different variances and noises.  
Thus, normal patterns can be flexible and volatile which hinders effectively and robustly distinguishing anomalies from normal data. Even though data transformation or representation learning in different detection algorithms have been proposed and some are resistant to noise to some extent, an inductive bias of variance removal of normal data is still necessary to be incorporated. 

In this paper, we devote to solving the aforementioned challenges in TSAD with the proposed GraphSubDetector, a graph neural network-based subsequence anomaly detection approach. Here we highlight our main contributions as follows:

1) In order to intelligently learn subsequence representations with proper length, we introduce a temporal convolution network (TCN) based feature encoder to generate representations from a multi-length view, and a length selection mechanism is utilized to determine the proper length for better depicting both normal and anomalous characteristics.

2) To further generate robust and effective representations for anomaly detection, we encode subsequence proximities into a graph and theoretically analyze that, with properly designed adjacency matrix, message passing between similar subsequence representations can reduce the variance of normal data and retain discrepancy between normal and anomalous data which makes anomalies prominent. Based on our analysis, we propose a novel density-aware adaptive graph neural network (DAGNN). Specifically, it adaptively learns the message passing adjacency matrix by incorporating temporal information and distance between subsequences in both data and latent space, and then refines it using local density information. By performing graph neural network on the learned graph, we can learn representations which distinguish normal and anomalous subsequences better. 

3) We have conducted experiments on multiple real-world TSAD benchmark datasets, and GraphSubDetector consistently outperforms state-of-the-art baselines. In particular, the proposed GraphSubDetector can effectively and efficiently detect different anomalous subsequences, which greatly improves its usage in practice. In terms of efficiency, the computational complexity of GraphSubDetector increases almost linearly as the data size increases.






\vspace{-2mm} 
\section{Related Work} \label{appendix:related works}
Subsequence anomaly detection in time series remains a challenging problem in the time series community. In this section, we briefly introduce some state-of-the-art algorithms for subsequence time series anomaly detection, including discord based methods, one-class based methods, reconstruction based methods, and other methods. 


\subsection{Anomaly Detection based on Discords} Subsequence anomaly detection can be formulated by finding discords in time series, where the discords are defined as the subsequences maximally different from other subsequences in the same time series~\cite{hot:sax:2005}. In \cite{discord:2007}, it is defined as the subsequence with the highest distance to its nearest neighbors. The distances can be calculated directly on the original signal~\cite{discord:2007}, or the representations like wavelets~\cite{discord:wavelet:2006}. Many methods in this category have been proposed, like Matrix Profile~\cite{yeh2016matrix} and its latest variants~\cite{nakamura2020merlin,matrix:profile:kdd22}.  Matrix Profile turns out to be an efficient solution for easy settings due to its efficient distance computation implementation. One limitation of the Matrix Profile is the challenging parameter tuning. For example, its performance degrades significantly when the window size is not properly set. Although it can easily capture the most different anomaly, it fails to detect similar anomalies which recur multiple times as illustrated in previous examples. This limitation can be mitigated to some extent by taking $k$-th nearest neighbor into consideration~\cite{yankov2008disk}, but how to adaptively select an appropriate value of $k$ still remains a challenge. 

\subsection{Anomaly Detection based on One-Class Classifier}
One-class approaches are commonly used in anomaly detection since the majority of the data is usually normal in anomaly detection. The One-Class Support-Vector-Machine (\cite{scholkopf2001estimating}, OC-SVM) and Support Vector Data Description (\cite{tax2004support}, SVDD) are two well-known methods in this category, which use a hyperplane and hypersphere to separate the normal data from anomalous data in the kernel-induced feature space, respectively. These methods can also be extended to time series anomaly detection~\cite{ma2003time}. However, their performances are often not satisfied when facing complex and high-dimensional datasets, due to the curse of dimensionality and non-efficient computational scalability. To overcome these limitations, Deep Support Vector Data Description (\cite{ruff2018deep}, DeepSVDD) is proposed to jointly train a deep neural network and optimize a data-enclosing hypersphere in output space. 
Inspired by Deep SVDD, THOC~\cite{shen2020timeseries} extends this deep one-class model by considering multiple spheres from all intermediate layers of a dilated recurrent neural network to extract multi-scale temporal features for better performance. 
Furthermore, Neural Contextual Anomaly Detection (\cite{ncad_ijcai22}, NCAD) incorporates the idea of Hypersphere Classifier (HSC) which improves the Deep SVDD by utilizing known anomalies data under the (semi-)supervised setting~\cite{ruff2020rethinking}, as well as a window-based approach specified for time series anomaly detection.
Another popular one-class classifier based method is the Deep Autoencoding Gaussian Mixture Model (\cite{zong2018deep}, DAGMM) which integrates a deep autoencoder with the Gaussian mixture model (GMM) to generate a low-dimensional representation to capture the normal data pattern. However, directly applying DAGMM in time series may not lead to improved outlier detection performance as it does not properly model temporal dependencies.  
 

\subsection{Anomaly Detection based on Reconstruction}
The reconstruction-based models usually learn a representation of the time series in latent space, and then reconstruct the signal from that representation. The anomalies are determined based on the reconstruction error. The long short-term memory based variational autoencoder (\cite{park2018multimodal}, LSTM-VAE) adopts serially connected LSTM and VAE layers to obtain the latent representation, estimates the expected distribution from the representation, and detects an anomaly when the log-likelihood of current observation given the expected distribution is lower than a threshold. 
OmniAnomaly~\cite{su2019robust} designs a stochastic recurrent neural network with state space models and normalizing flows for multivariate time series anomaly detection.
AnoGAN~\cite{schlegl2017unsupervised} proposes a deep generative adversarial network (GAN) to model data distribution and estimate their probabilities in latent space for anomaly detection. 
The Multi-Scale Convolutional Recursive Encoder Decoder (\cite{zhang2019deep}, MSCRED) uses a convolutional encoder and convolutional LSTM network to capture the inter-series and temporal patterns, respectively, and adopts a convolutional decoder to reconstruct the input time series for anomaly detection.

\subsection{Other Anomaly Detection Methods}
The rest of the anomaly detection methods roughly include density-based, transformer-based, and graph-based schemes. The Local Outlier Factor (\cite{breunig2000lof}, LOF) is a classic density-estimation method that assigns each object a degree of being an outlier depending on how isolated the object is with respect to the surrounding neighborhood. 
Anomaly Transformer~\cite{xu2021anomaly} proposes a new anomaly-attention mechanism to replace the original attention module and compute the association discrepancy, which can amplify the normal-abnormal distinguishability in time series under a minimax strategy to facilitate anomaly detection. 
Series2Graph~\cite{boniol13series2graph} transforms time series subsequences into a lower-dimensional space and constructs a directed cyclic graph, where the graph’s edges represent the transitions between groups of subsequences and can be utilized for anomaly detection. 
MTAD-GAT~\cite{zhao2020multivariate} designs two graph attention layers for learning the dependencies of time series in both temporal and feature dimensions, and then jointly optimizes a forecasting-based model and a reconstruction-based model for better anomaly detection results. While the existing graph-based methods achieve good results in some scenarios, they still have difficulty modeling and adapting to the challenging variable-length subsequences anomalies.

\section{Preliminaries}\label{subsec: problem defination}
\subsection{Notations and Problem Definition} 
A length $T$ univariate time series is denoted as $\boldsymbol{x} = (x_1, x_2, \cdots, x_T )$, where $x_t$ is a real-valued observation at time step $t$. A subsequence $\boldsymbol{x}_{t,L} = ( x_t, x_{t+1}, \cdots, x_{t+L-1})$ is a series of continuous observations from time series $\boldsymbol{x}$ starting at time step $t$ with length $L$, where $1 \leq t \leq T-L+1$. The problem of subsequence anomaly detection is to output the anomaly score $\mathrm{s}(\boldsymbol{x}_{t,L})$ for each subsequence which should be high for anomalous data and low for the normal. Then one can sort the scores in descending order to detect anomalies. 

Using the sliding window strategy with a stride $\tau$, a time series can be split into an ordered set of subsequences $\mathbf{X} \in \mathbb{R}^{N \times L}$, where $N$ is the number of subsequences and $L$ is the subsequence length. We denote the $i$-th subsequence $\boldsymbol{x}_{(i-1)\tau + 1,L}$ as $\mathbf{X}_i \in \mathbb{R}^{L}$. We assume that the initial maximum subsequence length $L$ is set to be large enough to support detecting various anomalies, which can be typically determined based on domain knowledge. For instance, $L$ can be four times of the period length for some periodic time series, e.g., the weather time series.

Moreover, we derive a multi-length view of subsequences with initial length $L$, and thus the proposed length selection mechanism can select a proper length. Specifically, the multi-length view of subsequence $\mathbf{X}_i$ is defined as $\mathcal{X}_i = \{ \mathbf{X}_{i,:l} \vert l = 2^p \Delta, p = 0,1,\cdots,P \}$, where $\mathbf{X}_{i,:l}$ indicates the first $l$ observations of $\mathbf{X}_i$ with $l$ increasing exponentially, $P$ is the largest positive integer satisfying $2^P \Delta \leq L$, and $\Delta$ is the length of an indivisible segment.

Note that in most real-world applications, finding and labeling anomalies are extremely expensive and time-consuming, while the labeled normal data are much easier to access. Hence, TSAD methods are usually trained with unlabeled data which are commonly treated as normal data as we assume that most data are normal due to the rareness of anomalies.

\subsection{Distance-based Anomaly Detection}
Distance-based anomaly detection methods usually calculate the distance between a target sample (or subsequence) and its reference in explicit data space or latent representation space as the anomaly score. We now provide details on a prevailing part of them.

\textbf{Time Series Discords.} \hspace{1mm}
Time series discords compute anomaly score of $\mathbf{X}_i$ as
\begin{equation}
    \mathrm{s}(\mathbf{X}_i) = \left\| \mathrm{z\mbox{-}norm}(\mathbf{X}_i) - \mathrm{z\mbox{-}norm}(\mathbf{X}_i^{(k-\mathrm{NN})})
    \right\| ^ 2,
\end{equation}
where $\mathrm{z\mbox{-}norm}(\mathbf{x}) = \left( \mathbf{x} - \mathrm{mean}(\mathbf{x}) \right) / {\mathrm{std}(\mathbf{x})}$ returns zero-normalized subsequence with $\mathrm{mean}(\cdot)$ and $\mathrm{std}(\cdot)$ standing for mean and standard deviation of input subsequence, and the corresponding reference $\mathbf{X}_i^{(k-\mathrm{NN})}$ is the $k$-th nearest neighbor\footnote{$k$ is usually set to $1$.} of $\mathbf{X}_i$. Despite its effectiveness and wide usage, it has several limitations as we discussed in Section \ref{sec:intro}.

\textbf{Deep Support Vector Data Description.} \hspace{1mm}
The SVDD and OC-SVM rely on a proper kernel to map data features to a high dimensional space for data separation. And the DeepSVDD algorithm replaces the kernel-induced feature space in the SVDD method with the feature space learned in a deep neural network. Specifically, DeepSVDD~\cite{ruff2018deep} is an unsupervised anomaly detection method that solves the following optimization problem
\begin{equation}
    \min \textstyle\frac{1}{N}\sum_{i=1}^{N}\|\mathrm{NN}(\mathbf{X}_i) - \mathbf{c}\| ^ 2,
\end{equation}
where DeepSVDD calculates the latent distance $\|\mathrm{NN}(\mathbf{X}_i) - \mathbf{c}\| ^ 2$ as anomaly scores, $\mathrm{NN}(\cdot)$ is a deep neural network, and $\mathbf{c}$ is a global reference which is the center of all the training data. 


\textbf{Hypersphere Classifier.} \hspace{1mm}
The Hypersphere Classifier (\cite{ruff2020rethinking}, HSC) extends DeepSVDD by training the network with the binary cross entropy objective function, which extends the model training with labeled anomalies. In particular, the HSC loss is given by
\begin{equation}\label{HSC_loss}
    -(1-y_i)\log l(\mathrm{NN}(\mathbf{X}_i)) - y_i\log(1- l(\mathrm{NN}(\mathbf{X}_i))), 
\end{equation}
where $y_i \in \{0,1\}$ with $0$ for the normal and $1$ for the anomalous, and $l:\mathbb{R}^d \rightarrow (0,1)$ maps the representation to a probabilistic prediction. Choosing $l(z) = \exp(-\|z\|^2)$ would reduce Equation (\ref{HSC_loss}) to DeepSVDD objective with center $\mathbf{c} = 0$ when all labels are $0$, i.e., all training samples are normal.

\subsection{Graph Neural Networks} \label{subsec:gnndef}
Graph neural networks (GNNs) ~\cite{GNN:survey:2021} operate on a graph $\mathcal{G} = (V, E, \mathbf{A})$, where  $V = [N] := \{ 1,\cdots,N \}$ is the set of node indices, $E$ is the set of edges, and $\mathbf{A} \in \mathbb{R}^{N \times N}$ is the weighted adjacency matrix representing the nodes' proximity with $\mathbf{A}_{ij}>0$ denoting a directed edge $(i,j)$ exists and $\mathbf{A}_{ij}=0$ otherwise.
Let $\mathbf{H} \in \mathbb{R}^{N \times d}$ be node representations, in which the $i$-th row $\mathbf{H}_i \in \mathbb{R}^d $ is the vectorized representation attached to node $i$. A GNN takes the node representations $\mathbf{H}$ along with the graph $\mathcal{G}$ as input and returns updated node representations $\mathbf{H}^{\prime} \in \mathbb{R}^{N \times d^{\prime}}$ using a message passing paradigm.
A simplified GNN used in this work is defined as follows:
\begin{equation}\label{node_representation}
    \mathbf{H}' = \rho(\mathbf{D}^{-1}\mathbf{A}\mathbf{H}\mathbf{W}_1 + \mathbf{H}\mathbf{W}_2 + \mathbf{b}), 
\end{equation}
where $\mathbf{W}_1, \mathbf{W}_2 \in \mathbb{R}^{d \times d'}$ are weight matrices that transform neighbor node features and target node features, respectively, $\mathbf{b} \in \mathbb{R}^{d'}$ is the bias vector, $\mathbf{D}  \in \mathbb{R}^{N \times N}$ is the diagonal degree matrix of $\mathbf{A}$, i.e., $\mathbf{D}_{ii} = \sum_j{\mathbf{A}_{ij}}$, $\mathbf{D}^{-1}\mathbf{A}$ represents the normalized adjacency matrix, and $\rho$ is a nonlinear activation function. A layer of GNN can aggregate information of 1-hop neighbors for each node. By stacking multiple layers, the receptive neighborhood range can be expanded.



\section{Methodology}
In this section, we describe GraphSubDetector, a graph neural network-based subsequence anomaly detection approach. Figure \ref{fig:model oveview} shows the proposed architecture.  A time series is first split into subsequences using sliding window. Then a multi-length feature encoder generates subsequence representations from a multi-length view and a length selection mechanism selects proper subsequence length. Moreover, subsequence proximities are into a prior graph structure using heuristics of time series discords, based on which, the message passing adjacency matrix is learned considering temporal information and distance between subsequences in both data and latent space, and then refine it using local density information. By performing graph neural network on the learned graph, the variance of representations of normal data can be reduced, while the discrepancy between normal and anomalous data is retained. 
Thus, the final generated representations can facilitate anomaly detection with superior performance.

\begin{figure*}
  \begin{minipage}[ht]{0.49\linewidth}
    \centering
    \includegraphics[width=1\textwidth]{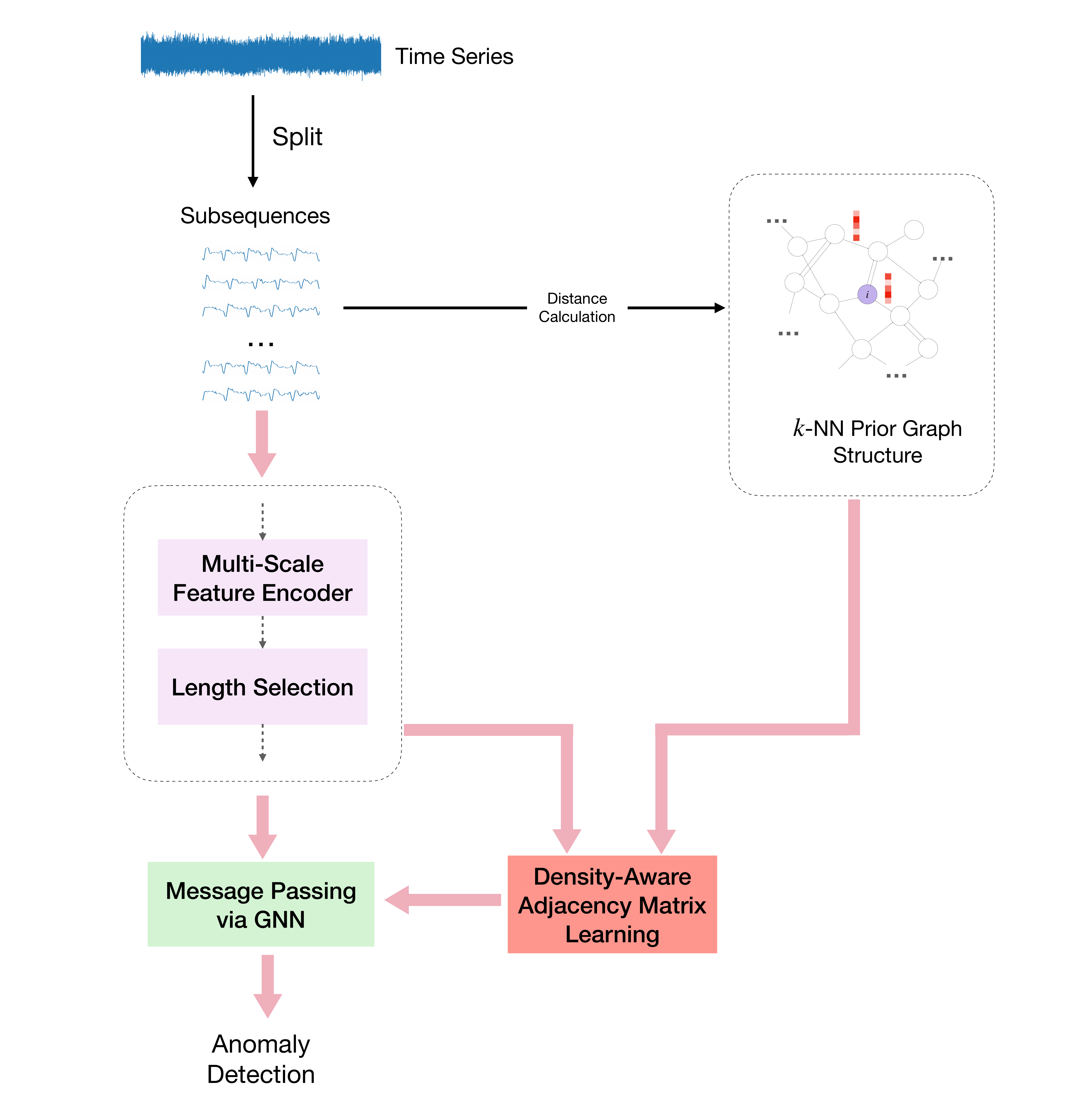}\vspace{-1mm}
    \caption{\small{An overview of the proposed GraphSubDetector.}}
    \label{fig:model oveview}
  \end{minipage}
  \hspace{2mm}
  \begin{minipage}[ht]{0.49\linewidth}
    \centering
    \includegraphics[width=1\textwidth]{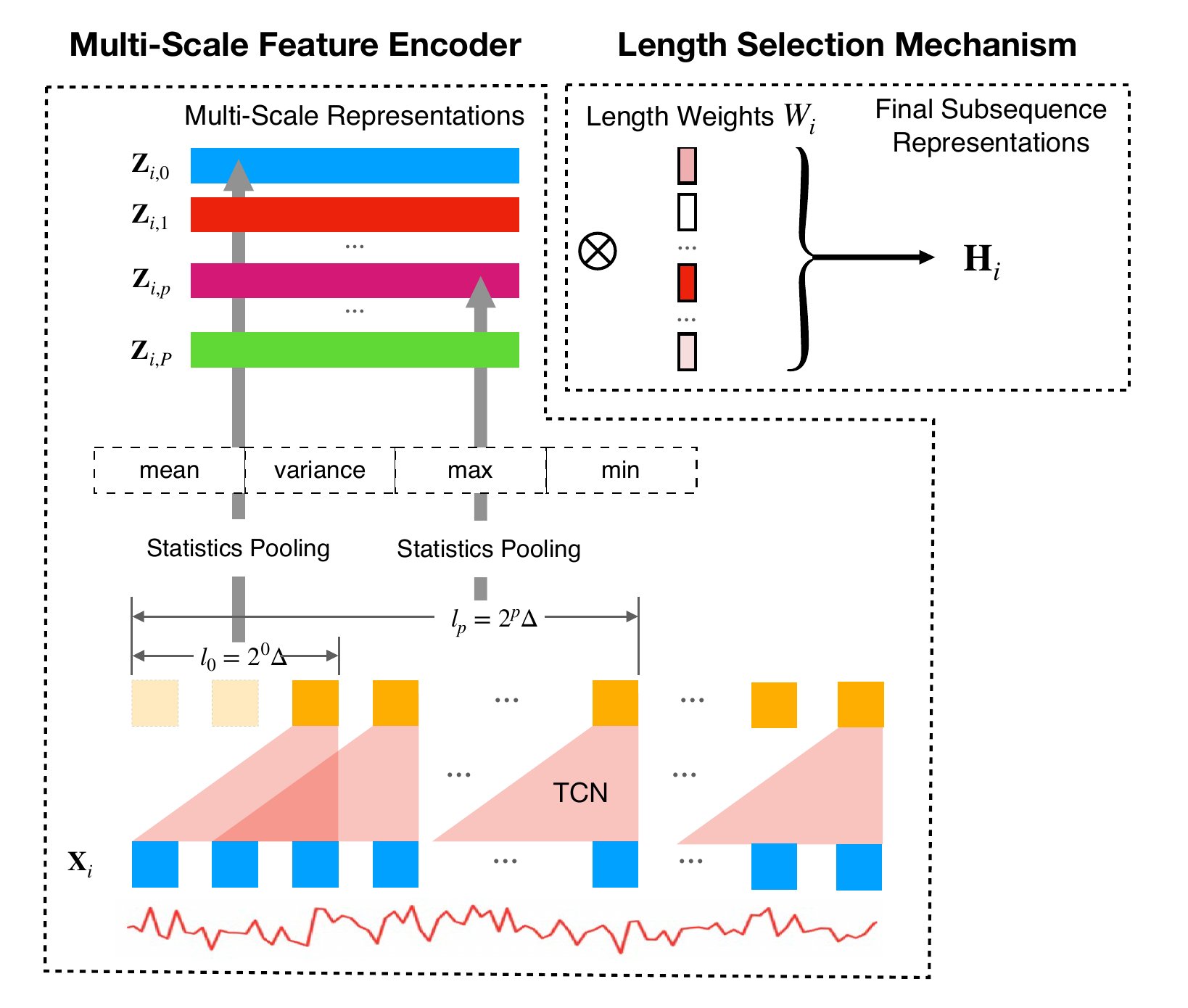}\vspace{-1mm}
    \caption{\small{Learning subsequence representations using multi-length encoder and length selection mechanism.}}
    \label{fig:multi-scale encode}
  \end{minipage}
\end{figure*}



\subsection{Learning Representations of Subsequences with Appropriate Lengths}

While selecting a proper subsequence length that highlights the characteristics of both normal and anomalous patterns is important for TSAD, most of the existing algorithms commonly set the subsequence length through trial and error or by some prior knowledge (e.g., the period length for a periodic time series). 
A potential modification is to perform anomaly detection with multiple subsequence lengths and vote for anomalies, while it is still not robust since anomaly scores of different lengths are hard to align. Thus, in order to learn expressive subsequence representations with proper length, we propose a temporal convolution network (TCN)~\cite{bai2018empirical} based multi-length feature encoder aggregating information at different length resolutions as well as a learnable length selection mechanism to generate representations of proper length.

\textbf{Multi-length Feature Encoder.} \hspace{1mm}
This encoder takes the raw subsequence $\mathbf{X}_i \in \mathbb{R} ^ L$ as input, and output the multi-length representations $\mathcal{Z}_i = \{ \mathbf{Z}_{i,0}, \cdots, \mathbf{Z}_{i,P} \}$, where $P$ is the number of length scales, defined in Section \ref{subsec: problem defination}. 
First, the TCN consisted of several layers of causal convolution with $\mathrm{ReLU}$ activation and layer normalization outputs the intermediate embeddings $\mathbf{R}_i = \mathrm{TCN}(\mathbf{X}_i) \in \mathbb{R}^{L \times d}$, where $d$ is the hidden dimension and necessary padding is performed to keep the subsequence length unchanged. Then, we aggregate information of variable length $l$ by pooling $\mathbf{R}_{i,:l} \in \mathbb{R}^{l \times d}$ into a single vector. Despite the fact that mean-pooling or max-pooling is frequently used to generate a single vector summarizing a sequence, they fail to characterize the statistical property adequately which is crucial for anomaly detection. For example, the maximum and minimum information are helpful for detecting spike and dip anomalies, and the variance increases when noises occur. To integrate this inductive bias into representation space, we define a statistics pooling operator $\mathrm{StatsPool}(\cdot)$ that calculates and concatenates basic statistics including mean, variance, maximum, and minimum of an $l$-length subsequence embedding through the first dimension (time axis) as:
\begin{align} 
   \mathbf{Z}_{i,p} \! &= \! \mathrm{StatsPool}(\mathbf{R}_{i,:l}) \notag \\ 
   \!&=\! \left[ \mathrm{mean}(\mathbf{R}_{i,:l});
   \mathrm{var}(\mathbf{R}_{i,:l});
   \mathrm{max}(\mathbf{R}_{i,:l});
   \mathrm{min}(\mathbf{R}_{i,:l}) \right],
\end{align}
where length $l=2^p \Delta$ is the $p$-th scale length. By performing the statistics pooling with multi-length view from scale $0$ to $P$, we can generate the multi-length representations $\mathcal{Z}_i = \{ \mathbf{Z}_{i,0}, \cdots, \mathbf{Z}_{i,P} \}$ with $\mathbf{Z}_{i,p} \in \mathbb{R} ^ {4d}$.

\textbf{Length Selection Mechanism.} \hspace{1mm}
As multi-length subsequence representations are generated, the next problem is how to select proper lengths among them. We propose a simple yet effective multi-length selection mechanism to learn an embedding $\mathbf{W}_i \in \mathbb{R} ^ P$ for each subsequence. Then the subsequence representation $\mathbf{H}_{i}$ can be generated as follows: 
\begin{equation}
\begin{split}
   \mathbf{Z}_{i} &= \mathrm{Softmax}( \mathbf{W}_i ) (\mathbf{Z}_{i,0}, \cdots, \mathbf{Z}_{i,P})^{\top} \in \mathbb{R} ^ {4d},   \\
   \mathbf{H}_{i} &= \mathrm{MLP}(\mathbf{Z}_{i}) \in \mathbb{R} ^ {d}.
\end{split}
\end{equation}

The length selection embedding $\mathbf{W}_i$ is initialized to an all-zero vector for anomaly-agnostic scenarios, which is then jointly learned with model parameters (in Section \ref{model learning}). As a result, the weights of different lengths are the same in the beginning, and during the training procedure, this embedding is optimized to pay more attention to the proper length that minimizes the anomaly detection loss function. It is worth mentioning that one can introduce priors of subsequence length by adjusting the initialization of $\mathbf{W}_i$ to $\boldsymbol{e}^{(p)} = [0,\dots,1,\dots,0] \in \mathbb{R} ^ P$ with $1$ at position $p$. For instance, $\boldsymbol{e}^{(0)}$ can be set for the time series dataset in which point-wise anomalies (e.g., spike and dip) often occur.

\subsection{Robust Representation Learning via Density-Aware Graph Neural Networks} \label{subsec:gnn}

As aforementioned, the inherent variance and noise in normal data always prevent detectors from effectively and robustly distinguishing anomalies from normal patterns, hence we propose to alleviate it by performing message passing between subsequences as noise removal. Below, we theoretically verify that, with properly-designed adjacency matrix, the discrepancy between anomalous and normal data can be retained while the variance of normal data is reduced, which makes anomaly prominent in data distribution.

Let $N, M$ denote the number of normal and anomalous data samples, respectively, $N \gg M$, and $d$ denotes the dimension of the features. Assume that normal samples $\mathbf{f}_1^{\mathrm{nor}}, \cdots, \mathbf{f}_N^{\mathrm{nor}} \sim \mathcal{N}(\boldsymbol{\mu}, \sigma^2)$ and anomalous samples deviate from normal ones. The generation procedure of anomalous samples is $\mathbf{f}_i^{\mathrm{ano}} = \mathbf{f}_i^{\mathrm{ref}} + \boldsymbol{\epsilon}_i$, where $1 \leq i \leq M$, $\mathbf{f}_i^{\mathrm{ref}} \sim \mathcal{N}(\boldsymbol{\mu}, \sigma^2)$ is the reference normal sample of the $i$-th anomaly, $\| \boldsymbol{\epsilon}_i \| = K \sigma$ is the corresponding nontrivial anomalous deviation, and $K > 0$. 

\begin{theorem} \label{theorem_1}
\label{thm:bigtheorem}
 Define a fully-connected graph $\mathcal{G}$ of data samples with adjacency matrix $\mathbf{A} \in \mathbb{R} ^ {(N+M) \times (N+M)}$ encoding similarity measurement of pair-wise samples, and the entry $\mathbf{A}_{ij} = \exp{(-\|\mathbf{f}_i - \mathbf{f}_j \| ^ 2 / \delta})$, which transforms Euclidean distance between features with a Gaussian kernel. Denote $\mathbf{F} \in \mathbb{R} ^{(N+M) \times d}$ as the feature matrix of all samples, $\mathbf{F}_{\star \mathcal{G}} = \mathbf{D}^{-1}\mathbf{A}\mathbf{F}$ is the feature matrix after message passing w.r.t. graph $\mathcal{G}$, and $\sigma_{\star \mathcal{G}}$ is the standard deviation of the newly generated features. 
 Then for any $K > 0$, there exist  $\delta > 0$ that for each anomalous sample we have 
 $
     {\| \mathbf{f}_{i, \star \mathcal{G}}^{\mathrm{ano}} - \mathbf{f}_{i, \star \mathcal{G}}^{\mathrm{ref}} \|}/{\sigma_{\star \mathcal{G}}} > {\|\mathbf{f}_i^{\mathrm{ano}} - \mathbf{f}_i^{\mathrm{ref}} \|}/{\sigma}.
 $
\end{theorem}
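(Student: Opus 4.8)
The plan is to read the update $\mathbf{F}_{\star\mathcal{G}} = \mathbf{D}^{-1}\mathbf{A}\mathbf{F}$ row by row as a kernel-weighted averaging: each new feature $\mathbf{f}_{i,\star\mathcal{G}} = \sum_j (\mathbf{A}_{ij}/\mathbf{D}_{ii})\mathbf{f}_j$ is a convex combination of all features, with weights proportional to the Gaussian affinities $\exp(-\|\mathbf{f}_i-\mathbf{f}_j\|^2/\delta)$. The bandwidth $\delta$ controls locality: as $\delta\to0$ the weights collapse onto the self term $\mathbf{A}_{ii}=1$ and the map tends to the identity, while increasing $\delta$ spreads mass onto neighbors. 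The entire argument hinges on separating two effects of this averaging — it should shrink the spread of the normal bulk (the denominator) faster than it shrinks the anomaly-to-reference gap (the numerator) — and then exhibiting a $\delta$ that realizes this separation for the given $K$.

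For the denominator I would pass to the large-$N$ (continuum) limit, replacing the $N$ normal samples by the density $\mathcal{N}(\boldsymbol\mu,\sigma^2 I)$ and, since $N\gg M$, treating the $M$ anomalies as negligible in the background. A product-of-Gaussians computation then shows that a bulk point $\mathbf{x}$ is mapped to the affine contraction $\alpha\,\mathbf{x}+(1-\alpha)\boldsymbol\mu$ with $\alpha = 2\sigma^2/(2\sigma^2+\delta)\in(0,1)$; consequently the transformed normal features are again Gaussian with standard deviation $\sigma_{\star\mathcal{G}}=\alpha\sigma<\sigma$. This is the variance-reduction half, and it already pins down the denominator of the claimed inequality in closed form.

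The heart of the proof is the numerator, and the structural fact I would exploit is the asymmetry created by the self-affinity $\mathbf{A}_{ii}=1$. For a normal point the total affinity to the rest of the data is of order $N$, so the self term is negligible and the point contracts by the full factor $\alpha$. For an anomaly $\mathbf{f}_i^{\mathrm{ano}}$ sitting far from the bulk, the total affinity $g := \sum_{j\neq i}\mathbf{A}_{ij}$ is exponentially suppressed by its distance to the bulk, so the update reads $\mathbf{f}_{i,\star\mathcal{G}}^{\mathrm{ano}} = \tfrac{1}{1+g}\mathbf{f}_i^{\mathrm{ano}} + \tfrac{g}{1+g}\big(\alpha\mathbf{f}_i^{\mathrm{ano}}+(1-\alpha)\boldsymbol\mu\big)$, i.e.\ an effective contraction by $\alpha^{\mathrm{ano}}=(1+g\alpha)/(1+g)$, which exceeds $\alpha$ by $(1-\alpha)/(1+g)>0$ and tends to $1$ as $g\to0$. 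Thus the anomaly moves strictly less than the bulk, so the gap $\|\mathbf{f}_{i,\star\mathcal{G}}^{\mathrm{ano}}-\mathbf{f}_{i,\star\mathcal{G}}^{\mathrm{ref}}\|$ contracts by a factor strictly closer to $1$ than $\alpha$. Dividing by $\sigma_{\star\mathcal{G}}=\alpha\sigma$ then yields a ratio strictly exceeding $K$, and existence of a suitable $\delta>0$ for every $K>0$ follows from the monotone dependence of $\alpha$ and $g$ on $\delta$ together with $N\gg M$, which guarantees a window where $g^{\mathrm{nor}}\gg1$ while $g^{\mathrm{ano}}$ stays bounded.

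I expect the main obstacle to be making the numerator estimate both rigorous and uniform. The reference $\mathbf{f}_i^{\mathrm{ref}}$ is itself a normal point and therefore also contracts, so the post-averaging gap is really $\|\alpha^{\mathrm{ano}}\mathbf{f}_i^{\mathrm{ano}}+(1-\alpha^{\mathrm{ano}})\boldsymbol\mu-\alpha\mathbf{f}_i^{\mathrm{ref}}-(1-\alpha)\boldsymbol\mu\|$, whose cross terms depend on the random orientation of $\boldsymbol\epsilon_i$ relative to $\mathbf{f}_i^{\mathrm{ref}}-\boldsymbol\mu$; controlling these so the strict inequality holds for each anomalous sample — and in particular in the small-$K$ regime, where the anomaly is poorly separated from the bulk and the clean isolation picture degrades — is the delicate step. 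I would handle it by first proving the statement for the leading-order (frozen-anomaly) approximation in expectation over $\boldsymbol\epsilon_i$, and then using concentration of the empirical kernel sums around their continuum values (valid because $N$ is large) to upgrade to a per-sample statement for an appropriately chosen $\delta$.
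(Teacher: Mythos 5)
Your proposal rests on the same two physical ingredients as the paper's proof: (i) in high dimensions $\boldsymbol{\epsilon}_i$ is nearly orthogonal to $\mathbf{f}_i^{\mathrm{ref}}-\mathbf{f}_j$, so every affinity from the anomaly is the corresponding affinity from the reference damped by the factor $\exp(-K^2\sigma^2/\delta)$, and (ii) averaging contracts the normal bulk. The execution, however, is genuinely different. The paper stays with the discrete sums, lower-bounds $\|\mathbf{f}_{\star\mathcal{G}}^{\mathrm{ano}}-\mathbf{f}_{\star\mathcal{G}}^{\mathrm{ref}}\|^2$ by replacing $a_j^{\mathrm{ref}}$ with $a_{\min}^{\mathrm{ref}}$ and the degree with $1+N$, extracts an explicit admissible range for $\delta$, and then concludes qualitatively that ``message passing reduces the variance.'' You instead pass to the continuum limit, where the product-of-Gaussians computation turns the update into an affine contraction $\mathbf{x}\mapsto\alpha\mathbf{x}+(1-\alpha)\boldsymbol{\mu}$ with $\alpha=2\sigma^2/(2\sigma^2+\delta)$, and the self-loop gives the anomaly a strictly larger effective factor $\alpha^{\mathrm{ano}}=(1+g\alpha)/(1+g)>\alpha$. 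This buys you the one step the paper leaves implicit: with $\sigma_{\star\mathcal{G}}=\alpha\sigma$ and, by orthogonality, $\|\mathbf{f}_{\star\mathcal{G}}^{\mathrm{ano}}-\mathbf{f}_{\star\mathcal{G}}^{\mathrm{ref}}\|\ge\alpha^{\mathrm{ano}}K\sigma$, the ratio inequality follows by dividing, so your argument is actually more complete on the numerator-versus-denominator comparison than the paper's. What the paper's route buys is an explicit finite-$N$, per-sample bound on $\delta$ without invoking a mean-field limit.

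One subtlety you should make explicit when you formalize the existence of $\delta$: shrinking $\delta$ suppresses $g^{\mathrm{ano}}$ but also suppresses $g^{\mathrm{nor}}$ (typical pairwise normal distances are of order $\sqrt{d}\,\sigma$, so normal affinities are themselves $\exp(-\Theta(d\sigma^2/\delta))$), so the regime ``$g^{\mathrm{nor}}\gg 1$ while $g^{\mathrm{ano}}\ll 1$'' is not automatic. Taking logs of $g^{\mathrm{nor}}\approx N e^{-2d\sigma^2/\delta}$ and $g^{\mathrm{ano}}\approx g^{\mathrm{nor}}e^{-K^2\sigma^2/\delta}$ shows the window is roughly $2d\sigma^2/\log N<\delta<(2d+K^2)\sigma^2/\log N$, which is nonempty for every $K>0$ but depends on $N$ and $d$; this is the quantitative content hiding behind your ``monotone dependence'' remark, and it plays the role of the paper's explicit upper bound on $\delta$. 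With that window pinned down, your argument goes through.
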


Proof and detailed analysis are left in Appendix \ref{appendix:theory}. Theorem \ref{theorem_1} can be concluded that by performing message passing on a similarity graph as defined above, the discrepancy between anomalous and normal data can be enlarged relatively through the variance reduction of normal data. Moreover, We find that by slightly modifying the formulation of the above adjacency matrix, the relative discrepancy of transformed anomalous and normal features can be further enlarged.

\begin{theorem} \label{theorem_2}
\label{thm:bigtheorem}
 Define fully connected graph $\mathcal{\hat{G}}$ with adjacency matrix $\mathbf{\hat{A}}$, and $\mathbf{\hat{A}}_{ij} = \exp{(-\|\mathbf{f}_i - \mathbf{f}_j \| ^ 2 / \delta} - \| \mathbf{f}_i - \boldsymbol{\mu} \| ^ 2 / c ) $. $\mathbf{F}_{\star \mathcal{\hat{G}}} = \mathbf{\hat{D}}^{-1}\mathbf{\hat{A}}\mathbf{F}$ is the feature matrix after message passing w.r.t. graph $\mathcal{\hat{G}}$, and $\sigma_{\star \mathcal{\hat{G}}}$ is the standard deviation of the newly generated features. 
 Then for any $\delta > 0$, there exist  $c > 0$ that for each anomalous sample we have
$
     {\| \mathbf{f}_{i, \star \mathcal{\hat{G}}}^{\mathrm{ano}} - \mathbf{f}_{i, \star \mathcal{\hat{G}}}^{\mathrm{ref}} \|}/{\sigma_{\star \mathcal{\hat{G}}}} > {\| \mathbf{f}_{i, \star \mathcal{G}}^{\mathrm{ano}} - \mathbf{f}_{i, \star \mathcal{G}}^{\mathrm{ref}} \|}/{\sigma_{\star \mathcal{G}}}.
$
\end{theorem}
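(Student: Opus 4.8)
The plan is to treat Theorem~\ref{theorem_2} as a perturbation of Theorem~\ref{theorem_1}, using $t = 1/c \ge 0$ as the control parameter so that $\hat{\mathcal G}$ continuously deforms into $\mathcal G$. The extra factor is a per-sample density weight $h = \exp(-\|\,\cdot\, - \boldsymbol\mu\|^2/c)$, large for samples in the high-density core near $\boldsymbol\mu$ and small in the low-density (anomalous, heavy-tail) region. After the row-normalized propagation $\hat{\mathbf D}^{-1}\hat{\mathbf A}\mathbf F$, each aggregate $\mathbf f_{i,\star\hat{\mathcal G}} = \sum_j \tilde w_{ij}\mathbf f_j$ is a weighted average in which this density weight modulates the contribution of each neighbor, re-normalizing the averaging mass away from the low-density samples. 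At $t=0$ we have $h \equiv 1$, $\hat{\mathcal G}=\mathcal G$, and the two ratios in the statement coincide; so it suffices to show the ratio $R(t) := \|\mathbf f_{i,\star\hat{\mathcal G}}^{\mathrm{ano}} - \mathbf f_{i,\star\hat{\mathcal G}}^{\mathrm{ref}}\|/\sigma_{\star\hat{\mathcal G}}$ is strictly increasing as $t$ leaves $0$, and then take $c=1/t$ large enough. This mirrors the logical shape of Theorem~\ref{theorem_1} (``for any $\delta$, there exists $c$''), with $\delta$ now fixed and $c$ the free parameter.

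First I would handle the denominator. Expanding $h = 1 - t\|\mathbf f - \boldsymbol\mu\|^2 + O(t^2)$ and substituting into the normalized weights, I would compute $\frac{d}{dt}\sigma_{\star\hat{\mathcal G}}^2\big|_{t=0}$ over the bulk of normal $\mathcal N(\boldsymbol\mu,\sigma^2)$ samples, which is legitimate because $N\gg M$ so the empirical variance is dominated by the normal cloud. The key mechanism is that the reweighting is negatively correlated with distance to $\boldsymbol\mu$: aggregated normal features lying far from the center lose weight, so their spread contracts. I would make this precise by showing the first-order change of the empirical covariance is negative, i.e. $\frac{d}{dt}\sigma_{\star\hat{\mathcal G}}^2|_{t=0} < 0$, using Gaussian moment identities for $\mathbf f - \boldsymbol\mu$ and $\|\mathbf f - \boldsymbol\mu\|^2$.

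Next, the numerator. For a fixed anomaly $i$ and its reference, the aggregates $\mathbf f_{i,\star\hat{\mathcal G}}^{\mathrm{ano}}$ and $\mathbf f_{i,\star\hat{\mathcal G}}^{\mathrm{ref}}$ are weighted averages over the same, predominantly normal, neighbor set, and the same per-neighbor density weights enter both aggregations; the two differ only through the $\delta$-kernel evaluated at $\mathbf f_i^{\mathrm{ano}}$ versus $\mathbf f_i^{\mathrm{ref}}$. Hence the density perturbation moves both aggregates by nearly the same vector to first order, so the gap $\|\mathbf f_{i,\star\hat{\mathcal G}}^{\mathrm{ano}} - \mathbf f_{i,\star\hat{\mathcal G}}^{\mathrm{ref}}\|$ changes only at second order in $t$ (its first-order derivative vanishes or is non-negative). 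Combining a vanishing numerator derivative with a strictly negative denominator derivative gives $\frac{d}{dt}R(t)\big|_{t=0^+} > 0$, and by continuity $R(t) > R(0)$ for all small $t>0$, which is exactly the claimed inequality for the corresponding $c$.

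I expect the main obstacle to be the denominator computation: rigorously establishing $\frac{d}{dt}\sigma_{\star\hat{\mathcal G}}^2|_{t=0} < 0$. This requires the first-order change of the empirical covariance of the aggregated features under the $h$-reweighting, which couples the random positions $\mathbf f_j$ with their own density weights, and the sign hinges on the positive correlation between $\|\mathbf f_j - \boldsymbol\mu\|^2$ and the deviation of the aggregate from its mean. I would control this by passing to the large-$N$ expectation limit and invoking Gaussian concentration, exactly as the proof of Theorem~\ref{theorem_1} must do to convert a statement about random finite samples into a deterministic inequality on $\sigma_{\star\hat{\mathcal G}}$. A secondary subtlety is confirming the numerator's first-order term is non-negative; should it be slightly negative, I would instead argue that it is dominated in magnitude by the denominator contraction, which still preserves the sign of $R'(0^+)$.
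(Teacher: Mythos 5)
Your mechanism is the same one the paper relies on --- the density factor re-weights the averaging mass toward $\boldsymbol{\mu}$, contracting the normal cloud and hence $\sigma_{\star\hat{\mathcal{G}}}$ below $\sigma_{\star\mathcal{G}}$ --- but the paper's own proof of this theorem is a one-sentence assertion of exactly that, whereas you try to make it quantitative via a first-order expansion in $t=1/c$. Before anything else, note that your proof only works under a corrected reading of the statement: as written, $\hat{\mathbf{A}}_{ij}$ carries the factor $\exp(-\|\mathbf{f}_i-\boldsymbol{\mu}\|^2/c)$ indexed by the \emph{target} node $i$, which is constant along each row and cancels under the row normalization $\hat{\mathbf{D}}^{-1}\hat{\mathbf{A}}$, forcing $\mathbf{F}_{\star\hat{\mathcal{G}}}=\mathbf{F}_{\star\mathcal{G}}$ and making the strict inequality impossible. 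You silently adopt the source-node convention $\exp(-\|\mathbf{f}_j-\boldsymbol{\mu}\|^2/c)$, which is what the paper's surrounding discussion intends; you should say so explicitly.

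The genuine gap is your treatment of the numerator. You claim the density perturbation moves $\mathbf{f}_{i,\star\hat{\mathcal{G}}}^{\mathrm{ano}}$ and $\mathbf{f}_{i,\star\hat{\mathcal{G}}}^{\mathrm{ref}}$ by ``nearly the same vector to first order,'' but this fails because of the self-loop terms: the anomaly's own weight in its aggregate is damped by $\exp(-t\|\mathbf{f}^{\mathrm{ano}}-\boldsymbol{\mu}\|^2)$, and since $\boldsymbol{\epsilon}$ is (approximately) orthogonal to $\mathbf{f}^{\mathrm{ref}}-\boldsymbol{\mu}$ one has $\|\mathbf{f}^{\mathrm{ano}}-\boldsymbol{\mu}\|^2\approx\|\mathbf{f}^{\mathrm{ref}}-\boldsymbol{\mu}\|^2+K^2\sigma^2$, so the anomaly's self-weight drops at first order in $t$ by an \emph{extra} factor proportional to $K^2\sigma^2$ relative to the reference's. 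The first-order derivative of each aggregate is a weighted covariance of $\|\mathbf{f}-\boldsymbol{\mu}\|^2$ with $\mathbf{f}$, and the anomaly's contribution to its own covariance pulls $\mathbf{f}_{\star\hat{\mathcal{G}}}^{\mathrm{ano}}$ toward the normal cloud along $\boldsymbol{\epsilon}$ strictly more than the reference is pulled; hence the gap shrinks at first order, with a rate scaling like $K^2\sigma^2$ while your denominator contraction scales with $\sigma^2$-sized Gaussian moments. Your fallback (``dominated in magnitude by the denominator contraction'') is therefore not automatic --- for large $K$ it is exactly the comparison that constitutes the theorem, and neither you nor the paper establishes it. To close the argument you would need an explicit lower bound on $\mathrm{d}\,\sigma_{\star\hat{\mathcal{G}}}^2/\mathrm{d}t|_{t=0}$ in magnitude together with an upper bound on the numerator's first-order contraction, and a choice of $c$ (not necessarily large) balancing the two; alternatively, restrict the density weight to the neighbor sum only (excluding self-loops), under which your first-order-invariance claim for the numerator actually holds and the proof goes through.
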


The detailed proof is left in Appendix \ref{appendix:theory}. Note that the second part $\exp{(- \| \mathbf{f}_j - \boldsymbol{\mu} \| ^ 2 / c )}$ in similarity measurement is actually a transformation of the probability density of source node $\mathbf{f}_j $ w.r.t. the normal sample distribution $\mathcal{N}(\boldsymbol{\mu}, \sigma^2)$, which means introducing a probability density-dependent factor is helpful to enlarge the relative discrepancy between anomalous and normal data compared with normal data variance.

\textbf{Density-Aware Adaptive Graph Neural Networks.} \hspace{1mm} 
The above analysis implies that incorporating data similarities and density information to learn adjacency matrix is helpful for learning robust representations against the inherent variance of normal data. Based on this motivation, we design a novel density-aware adaptive graph neural network (DAGNN), consisting of the following three components:

\emph{$K$-nearest neighbor prior graph.} \hspace{1mm} 
Motivated by time series discords, we encode distance information in explicit data space as the prior graph.
We build the prior graph following three principles: (1) multiple distance measures in data space should be considered to characterize subsequence relationships from different perspectives, (2) distance information of subsequence with multiple lengths should be introduced, and (3) only informative neighbors of each node are reserved for the sparsity and purity of the graph. 

The prior graph is described by a 3-tuple: $\mathcal{G}^{\mathrm{prior}}  \!\!=\! (V, E^ {\mathrm{prior}}, \mathbf{A}^ {\mathrm{prior}}) $, accompanied with edge attributes $ \mathbf{E} $, where  $V = [N]$ is the set of node indices representing a subsequence each, $\mathbf{A}^ {\mathrm{prior}}  \in \{0,1\}^{N \times N}$ is the initial unweighted adjacency matrix, and edge attributes $ \mathbf{E}  \in \mathbb{R}^{N \times N \times d_{e}}$ store the distance information between subsequences. 
We take the $\mathbf{A}^ {\mathrm{prior}}_{ij} = 1$ (a directed edge from $j$ to $i$) and the corresponding edge attribute $\mathbf{E}_{ij}  \in \mathbb{R}^{d_e}$ as an example. Recalling the multi-length view of subsequences  $\mathcal{X}_i = \{ \mathbf{X}_{i,:l} \vert l = 2^p \Delta, p = 0,1,\cdots,P \}$. We calculate Euclidean distance $d_{ij,l}$ and z-normalized Euclidean distance $d_{ij,l}^\mathrm{z\mbox{-}norm}$ between subsequences $i$ and $j$ of length  $l = 2^p \Delta$, $p = 0,1,\cdots,P$, and each distance $d_{ij}$ is an entry of the edge attribute $\mathbf{E}_{j,i} $. Moreover, the initial graph structure follows the definition of the $K$-NN graph to enforce sparsity, i.e., edge $(j,i)$ is present if any distance between $i$ and $j$ is among the $K$-th smallest distances between $i$ and other nodes.

\emph{Density-aware adaptive adjacency matrix learning.} \hspace{1mm}
During adjacency matrix learning in practice, we go beyond Theorem \ref{theorem_1} by not only using the distance between (latent) features to generate adjacency matrix but also considering distance information in data space as well as temporal information. We do not change the graph structure of the initial graph for efficient computation, and the adaptive adjacency matrix $\mathbf{A}$ is calculated as:
\begin{equation} \notag
\resizebox{1.0\hsize}{!}{$
    \mathbf{A}_{ij} = \left\{
    \begin{array}{cc}
    e^{-\frac{\| \mathbf{H}_i - \mathbf{H}_j \|^2}{\delta_1}-\frac{ \mathrm{MLP}(\mathbf{E}_{ij}) }{\delta_2}-\frac{| \mathrm{pos}_i - \mathrm{pos}_j |~ \mathrm{mod} ~T}{\delta_3}},
     & \mathbf{A}^ {\mathrm{prior}}_{ij} = 1, \\
     0, &  \mathbf{A}^ {\mathrm{prior}}_{ij} = 0,
    \end{array}
    \right.
$}
\end{equation}
where $\delta_{1,2,3}$ are hyperparameters, $\mathbf{H}$ is the representations from upstream neural networks, $\mathrm{MLP}(\cdot)$ stands for a certain multi-layer perception network, and $\mathrm{MLP}(\mathbf{E}_{ij})$ utilizes the multi-length distance information in data space. $\mathrm{pos}_i$ and $ \mathrm{pos}_j$ are the position of subsequence $\mathbf{X}_i$ and $\mathbf{X}_j$, respectively, and $T$ is the period length. $| \mathrm{pos}_i - \mathrm{pos}_j |~ \mathrm{mod}~ T$ accounts for periodic temporal distance, and for non-periodic series, this term will be omitted.

According to Theorem \ref{theorem_2}, we further need to refine the learned adjacency matrix with density information, while the density of each node cannot be directly estimated. We find that the density of a node can be revealed using its similarity to neighbor nodes. A toy example is that the node with high similarity to its neighbors always has a higher density, and vice versa. As a result, we generate density-aware adjacency matrix $\hat{\mathbf{A}}$ on the top of $\mathbf{A}$ as:
\begin{equation}
    \hat{\mathbf{A}}_{ij} = \left\{
    \begin{array}{cc}
     \mathbf{A}_{ij} \exp\{-\frac{ \mathrm{MLP}(\mathbf{A}_{i:}) }{\delta_4} \},  
     & \mathbf{A}^ {\mathrm{prior}}_{ij} = 1,\\
     0, &  \mathbf{A}^ {\mathrm{prior}}_{ij} = 0,
    \end{array}
    \right.
\end{equation}
where $\mathbf{A}_{i:}$ is the $i$-th row vector of $\mathbf{A}$ which gives the profile of node $i$'s similarity to its neighbor nodes implying density information.

\emph{Message passing.} \hspace{1mm}
Once the density-aware adjacency matrix is learned, the message-passing procedure is performed as:
\begin{equation}\label{node_representation}
    \mathbf{H}' = \rho(\hat{\mathbf{D}}^{-1} \hat{\mathbf{A}}\mathbf{H}\mathbf{W}_1 + \mathbf{H}\mathbf{W}_2 + \mathbf{b}), 
\end{equation}
where the notations are defined in Section \ref{subsec:gnndef}.

\subsection{Model Learning} \label{model learning}

\textbf{Anomaly Injection.} \hspace{1mm} 
As we mentioned above, the labeled anomaly data are rare for training in most situations. Here the anomaly injection method is used to inject several labeled artificial anomalous subsequences into the time series. We consider six different types of anomalies, including spike \& dip, resizing, warping, noise injection, left-side right, and upside down. In more detail, spike \& dip stands for standard point anomaly with unreasonable high or low value, resizing randomly changes the temporal resolution of subsequence, warping denotes random time warping and noise injection for random noise injected, while the latter two stand for reversement of subsequence from left to right and up to down, respectively. Despite the fact that such artificial anomalies may not coincide with real anomalies, it is beneficial for learning characteristic representations of normal behaviors.

\textbf{Nearest Neighbour Contextual Anomaly Detection.} \hspace{1mm}
Following the distance-based anomaly detection methods, our approach identifies anomalies by comparing the target representation with its neighboring representations, and the anomaly score is defined as: 
\begin{equation} \label{equ:anomaly score}
    s(\mathbf{X}_i) = \frac{1}{N}\sum_{j =1}^{N} \mathbf{A}^{\mathrm{prior}}_{ij} \left\| \mathbf{H}^{\prime}_i - \mathbf{H}^{\prime}_j \right\|^2, 
\end{equation}
where $\mathbf{H}^{\prime}_i$ and $\mathbf{H}^{\prime}_j$ are node representations generated from DAGNN. $s(\mathbf{X}_i)$ returns the mean of Euclidean distances between the target and its neighbors as anomaly scores. By incorporating artificially labeled anomalies, we can derive a loss function $\mathcal{L}$ following the Hypersphere Classifier (HSC)~\cite{ruff2020rethinking} objective:
\begin{equation}
    \mathcal{L} = \textstyle\frac{1}{N} \sum_{i=1}^{N} -(1 - y_i) s(\mathbf{X}_i) - y_i\log (1 - \exp(-s(\mathbf{X}_i))),
\end{equation}
where $s(\mathbf{X}_i)$ is the anomaly score of subsequence $\mathbf{X}_i$ (see Equation (\ref{equ:anomaly score})), and $y_i \in \{ 0,1 \}$ with $1$ for (artificial) anomalous subsequences and $0$ for normal and unlabeled subsequences. 
Furthermore, in the scenarios with both normal and unlabeled data in the training set, we actually use a weighted version of HSC objective, where normal data is assigned with a larger weight than unlabeled data, as if a known normal sample gets a high anomaly score, it needs to be penalized more.

\textbf{Regularizations.} \hspace{1mm} We introduce two regularizations for stable learning, including auto-encoding regularization and length selection embedding regularization. 

\emph {Auto-encoding regularization.} We introduce an auxiliary MLP decoder to reconstruct the original subsequence $\mathbf{X}_i$ using the output of GNN  $\mathbf{H}^{\prime}_i$. The auto-encoding regularization is defined as
\begin{equation}
    \mathcal{L}_{\mathrm{dec}} = \textstyle\frac{1}{N} \sum_{i=1}^{N} \mathrm{MSE}(\mathbf{X}_i, \mathrm{Dec}(\mathbf{H}^{\prime}_i)),
\end{equation}
where $\mathrm{MSE}(\cdot)$ denotes the mean squared error loss and $\mathrm{Dec}(\cdot)$ is the decoder. Suppose that artificial anomalies are not involved, i.e., $y_i = 0$ for all training samples, the network might corrupt to map all samples to a constant representation and achieve the optimal. This regularization is utilized to restrict the network to preserve the information of input data to avoid trivial solutions. Even with anomaly injection, it can still conduce to the model's robustness.

\emph {Length selection embedding regularization.} We impose a restriction that the length selection embeddings of adjacent nodes should be close, which is defined as
\begin{equation}
    \mathcal{L}_{\mathrm{len}} = \frac{1}{|E|}\sum_{\mathbf{A}^{\mathrm{prior}}_{ij} = 1 } \left\|  \mathbf{W}_i - \mathbf{W}_j \right\| ^ 2,
\end{equation}

where $\mathbf{W}\in \mathbb{R}^{N\times P}$, 
the row vector $\mathbf{W}_i\in \mathbb{R}^P, \forall i\in \{1,\cdots,N\}$ is the length selection embedding of each node, 
and $\|E\|$ is the cardinality of the edge set, i.e., the total number of edges.

\textbf{Training Procedure.} \hspace{1mm} As two components need to be optimized, i.e., model parameters $\boldsymbol{\theta}$ and length selection embedding $\mathbf{W}$, we propose a training strategy with two phases operating alternately towards better control of joint optimization
\begin{equation}\label{eq_training}
\begin{aligned}
    \boldsymbol{\theta} \leftarrow \mathrm{arg} \min_{\boldsymbol{\theta}} \mathcal{L} + \lambda \mathcal{L}_{\mathrm{dec}}, ~~~
   \mathbf{W} \leftarrow \mathrm{arg} \min_{\mathbf{W}} \mathcal{L} + \mu \mathcal{L}_{\mathrm{len}},
\end{aligned}
\end{equation}
where $\lambda > 0$ and $\mu > 0$ are hyperparameters. The two phases above alternate during model training.


\section{Experiments}

In this section, we compare the performance of our approach with other methods on multiple benchmark datasets, conduct case studies to analyze the model's behavior, evaluate the sensitivity of model parameters, and investigate model variations in ablation studies. 
\subsection{Datasets and Evaluation Metrics}

\textbf{Datasets.} \hspace{1mm} We adopt the following eight annotated real and synthetic datasets from various scenarios to conduct comprehensive evaluations.
1)
$\mathrm{UCR}$\footnote{\url{https://www.cs.ucr.edu/~eamonn/time_series_data_2018/}}: the well-known subsequence anomaly dataset from ``KDD cup 2021 multi-dataset time series anomaly detection'' competition, consisting of 250 univariate time series from different domains with one subsequence anomaly per series. The lengths of the series vary from 4000 to one million.
2)
$\mathrm{UCR\mbox{-}Aug}\footnote{This dataset is included in the Anonymous GitHub link in the Abstract.}$: since $\mathrm{UCR}$ contains one anomaly per time series, we augment $\mathrm{UCR}$ by adding various types of subsequence anomalies with variable-length into each time series which is more consistent with most real-world scenarios. 
3)
$\mathrm{SED}$~\cite{aggarwal2017introduction}: from the NASA Rotary Dynamics Laboratory, records disk revolutions measured over several runs (3K rpm speed).
4)
$\mathrm{IOPS}$\footnote{\url{http://iops.ai/dataset_detail/?id=10}}~\cite{paparrizos2022tsb}: is a dataset with performance indicators that reflect the scale, quality of web services, and health status of a machine.
5)
$\mathrm{ECG}$~\cite{moody2001impact}: is a standard electrocardiogram dataset and the anomalies represent ventricular premature contractions. Long series MBA (14046) is split to 47 series.
6)
$\mathrm{SMAP}$ and 7) $\mathrm{MSL}$\footnote{$\mathrm{SMAP}$, $\mathrm{MSL}$ and the following $\mathrm{SMD}$ have a predefined train/test split, we do not use labels in training set following unsupervised paradigm.}: Soil Moisture Active Passive satellite (SMAP) and Mars Science Laboratory rover (MSL), two datasets published by NASA \cite{hundman2018detecting}, with 55 and 27 series, respectively. The lengths of the time series vary from 300 to 8500 observations.
8) 
$\mathrm{SMD}$: Server Machine Dataset~\cite{su2019robust}, a 5 weeks long dataset with 38-dimensional time series each collected from a different machine in large internet companies. 

\textbf{Evaluation Metrics.} \hspace{1mm} For the first five subsequence anomaly datasets, we choose AUC, VUS~\cite{paparrizos2022volume} and Recall@$k$ as evaluation metrics. AUC stands for the area under receiver operating characteristic (ROC) curves. VUS~\cite{paparrizos2022volume} is a recently proposed new metric better for subsequence anomaly detection, which stands for volume under the surface of ROC. Recall@$k$ is the recall rate for the number of anomalous subsequences found in the top-$kn$ anomaly scored ones where $n$ is the total number of anomalous subsequences in one time series, i.e., for each anomaly, we have $k$ reporting opportunities considering the cost of check is limited in reality. For the last three multivariate datasets, we report F1 scores computed by choosing the best threshold on the test set following prior works~\cite{su2019robust,shen2020timeseries,ncad_ijcai22}.
\subsection{Implementation Details}
We set the length of an indivisible segment $\Delta=0.125m$ for periodic time series with period length $m$, and $\Delta=10$ for non-periodic time series. We utilize a sliding window with a stride $\tau = 2\Delta$ to generate subsequences. For the multi-length view of subsequences, we set the maximum length scale $P=5$, and thus the length can vary from 0.125 to 4 times of period length for periodic time series, which can stand for most common anomalies. 



\textbf{Hyperparameters.} \hspace{1mm} For hyperparameter tuning, we randomly select 8 time series to perform grid search. We use this inferred set of hyperparameters for all datasets. The hyperparameter settings are summarized in Table \ref{table:hyperparam}.

\begin{table*}[h]
\centering
\caption{Hyperparamter settings of the GraphSubDetector.} \label{table:hyperparam}
\scalebox{0.9}{
 \begin{tabular}{c| c | c }
    \toprule
        Hyperparameter & Description & Value \\
    \midrule
        $\Delta$ & the length of an indivisible segment  & \makecell[c]{$0.125T$ for periodic time series\\(where $T$ is the period length), and $10$ otherwise} \\
    \midrule
        $\tau$ & stride length & $2\Delta$ \\
    \midrule
        $P$ & the maximum length scale & $5$ \\
    \midrule
        $L$ & the maximum subsequence length & $2^P\Delta$ \\
    \midrule

        $\delta_{1,2,3,4}$ & \makecell[c]{the scale factors of multiple \\ similarities} & \makecell[c]{$\delta_1 = d$ (where $d$ is the hidden dimension), \\ $\delta_2 = 1.0$, \\ $\delta_3 = T$ (where $T$ is the period length), \\$\delta_4 = 1.0$ }  \\
    \midrule
        $\lambda$ & \makecell[c]{the coefficient of auto-encoding \\ regularization} & 1.0 \\
    \midrule
        $\mu$ & \makecell[c]{the coefficient of length selection\\ embedding regularization} & 0.2 \\
    \bottomrule
    \end{tabular}
}
\end{table*}


\textbf{Model Training.} \hspace{1mm} We run the model 5 times on each benchmark dataset and report the average score. We train the model with a Tesla V100 32GB GPU. For all datasets, we train GraphSubDetector for 10 epochs with a learning rate of $1 \times 10^{-4}$ for model parameters and  $5 \times 10^{-4}$ for length selection embeddings.


\subsection{Experiment Comparisons}\label{sec_expe}

\begin{table}[t]
\centering
\caption{Performance comparisons of different algorithms on $\mathrm{UCR,UCR\mbox{-}Aug}$ datasets under Recall@k metric. R indicates Recall. The best ones are in bold, and the second ones are {\underline{underlined}}.}
\scalebox{0.95}{
 \begin{tabular}{c|c| cccc }
    \toprule
    Dataset&  Algorithm  & R@1 & R@3 & R@5 & R@10   \\
    \midrule
   \multirow{5}{*}{$\mathrm{UCR}$} & Matrix Profile & \underline{0.436} & \underline{0.548}  & \underline{0.596} & 0.644 \\
     &  Series2Graph &  0.276  & 0.348  & 0.372  & 0.408 \\
     &  LOF  & 0.312 & 0.360 & 0.409  & 0.461 \\
     &  NCAD  & 0.352 & 0.412 & 0.520 & \underline{0.648} \\
    \cline{2-6}
     &  GraphSubDetector & \textbf{0.560}& \textbf{0.598} & \textbf{0.663} & \textbf{0.738} \\
    \midrule
   \multirow{5}{*}{$\mathrm{UCR\mbox{-}Aug}$}  & Matrix Profile  & 0.709 & \underline{0.841} & \underline{0.873} & \underline{0.899} \\
     & Series2Graph & 0.442 & 0.610 & 0.687 & 0.761 \\
     & LOF  & 0.594 & 0.736 & 0.798 & 0.861 \\
     & NCAD & \underline{0.722} & 0.803 & 0.839 & 0.890 \\
    \cline{2-6}
     & GraphSubDetector & \textbf{0.755} & \textbf{0.902} & \textbf{0.928} & \textbf{0.989} \\
    \bottomrule
    \end{tabular}\label{table:ucr-aug}
}
\end{table}

\begin{table*}[t]
\centering
\caption{Performance of models on subsequence anomaly datasets under AUC and VUS metrics. The best ones are in bold, and the second ones are {\underline{underlined}}.}
\scalebox{0.9}{
 \begin{tabular}{c| cc|cc|cc |cc|cc }
    \toprule
       Dataset & \multicolumn{2}{|c|}{$\mathrm{UCR}$} &\multicolumn{2}{|c|}{$\mathrm{UCR\mbox{-}Aug}$} &\multicolumn{2}{|c|}{$\mathrm{SED}$} &\multicolumn{2}{|c}{$\mathrm{IOPS}$}&\multicolumn{2}{|c}{$\mathrm{ECG}$}\\
    \midrule
      Metric &AUC& VUS & AUC & VUS   &AUC& VUS & AUC & VUS & AUC & VUS   \\
    \midrule
    Matrix Profile & 0.872 & 0.868 & 0.908 & 0.891 & \underline{0.996} & \underline{0.993} & 0.374 &  0.374& 0.796 & 0.753 \\
      PCA  & 0.568 & 0.561 & 0.571 & 0.562& 0.587 & 0.537 & 0.756 & 0.754 & 0.737 & 0.728 \\
      AE  & 0.770 & 0.763 & 0.861 & 0.848 & 0.962 & 0.951 & 0.596 & 0.584 & 0.686 & 0.679 \\
      NORMA & \underline{0.874} & \underline{0.872} & \underline{0.942} & \underline{0.935} & \textbf{0.998} & \textbf{0.994} & \underline{0.982} & \textbf{0.978} & 0.576& 0.569  \\
      IForest & 0.672 & 0.668 & 0.788 & 0.775& 0.669  & 0.650 & 0.976 & 0.955  & \underline{0.843}  & \underline{0.829} \\
      LSTM-AD & 0.676 & 0.672 & 0.767 & 0.760& 0.712 & 0.694 & 0.507 & 0.494 & 0.651 & 0.649\\
    \midrule
      GraphSubDetector & \textbf{0.913} & \textbf{0.907} & \textbf{0.956} & \textbf{0.945}& 0.995 & 0.991 & \textbf{0.985} & \underline{0.967} & \textbf{0.923} & \textbf{0.919}\\
    \bottomrule
    \end{tabular}\label{table:public_dats}
}
\end{table*}

We first compare GraphSubDetector with anomaly detectors that are specified for subsequence anomaly detection using subsequence anomaly datasets, including Matrix Profile~\cite{yeh2016matrix} (a time series discord search method), Series2Graph~\cite{boniol13series2graph}, LOF~\cite{breunig2000lof}, and NCAD~\cite{ncad_ijcai22}. 
Table \ref{table:ucr-aug} reports the performance of different algorithms on $\mathrm{UCR}$ and $\mathrm{UCR\mbox{-}Aug}$ datasets. It can be seen that our GraphSubDetector outperforms Matrix Profile (the second best method) by a reasonable margin on $\mathrm{UCR}$ dataset and a large margin on \textbf{$\mathrm{UCR\mbox{-}Aug}$} dataset. This is mainly because $\mathrm{UCR}$ has one anomaly per time series, to which time series discords are naturally applicable. While each time series in $\mathrm{UCR\mbox{-}Aug}$ is more challenging with multiple anomalies and some of them are similar in anomalous patterns. Furthermore, we compare GraphSubDetector with more classical anomaly detection baselines from different perspectives, including principal component analysis (PCA), AutoEncoder~\cite{sakurada2014anomaly}, NORMA~\cite{boniol2021unsupervised}, IForest and LSTM-AD~\cite{malhotra2015long}, on total five subsequence anomaly datasets. As shown in Table~\ref{table:public_dats}, we use AUC and VUS as they do not rely on a user-defined score threshold to predict normal or anomalous labels, especially VUS would give a more robust evaluation on subsequence anomalies. We observe that GraphSubDetector gives the best performance on most datasets. On the $\mathrm{UCR}$,$\mathrm{UCR\mbox{-}Aug}$, and $\mathrm{ECG}$ datasets, GraphSubDetector significantly outperforms all other baselines.

\begin{table}[h]
\centering
\caption{F1 score of models on multivariate datasets. The best ones are in bold, and the second ones are {\underline{underlined} }.}
\vspace{1mm}
\scalebox{0.9}{
 \begin{tabular}{c| c c c }
    \toprule
        Dataset & $\mathrm{SMAP}$ & $\mathrm{MSL}$ & $\mathrm{SMD}$ \\
    \midrule
      Deep SVDD & 71.71& 88.12 &--\\
        OmniAnomaly & 84.34 & 89.89 & \underline{88.57} \\
        THOC & \underline{95.18} & 93.67 & --\\
        NCAD  & 94.45 & \textbf{95.6}  &  80.16   \\
        GTA &90.41 & 91.11 & -- \\
        GDN &95.14 & 92.62 & --\\
    \midrule
        GraphSubDetector  & \textbf{96.67} & \underline{95.32}  & \textbf{91.12} \\
    \bottomrule
    \end{tabular}\label{table_multivar}
}
\end{table}

We also compare GraphSubDetector against the state-of-the-art deep learning models, including DeepSVDD~\cite{ruff2018deep}, OmniAnomaly~\cite{su2019robust}, THOC~\cite{shen2020timeseries}, NCAD~\cite{ncad_ijcai22}, GTA~\cite{chen2021learning}, and GDN~\cite{deng2021graph}, on SMAP, MSL, and SMD datasets (mainly containing point-wise anomalies) in Table \ref{table_multivar}. Note that those deep learning methods are optimized for detecting point-wise anomalies, which is not our main concern. However, GraphSubDetector still remains highly competitive. GraphSubDetector only has a slightly lower F1 score than NCAD on one dataset ($\mathrm{MSL}$), while it outperforms all baselines on the other two datasets (SMAP and SMD).





\begin{table}[h]
\setlength{\belowdisplayskip}{-2mm} 
\centering
\caption{Performance comparisons of model variations of GraphSubDetector on $\mathrm{UCR}$ and $\mathrm{UCR\mbox{-}Aug}$ datasets. R indicates Recall. The best results are highlighted in bold.}
\scalebox{0.8}{
 \begin{tabular}{c| ccc | ccc }
    \toprule
        Dataset & \multicolumn{3}{|c|}{$\mathrm{UCR}$} &\multicolumn{3}{|c}{$\mathrm{UCR\mbox{-}Aug}$}\\
    \midrule
      Metric  &AUC& R@1  & R@5  & AUC& R@1  & R@5  \\
    \midrule
      w/o graph & 0.7498 & 0.366 & 0.609 & 0.8713 & 0.714 & 0.881 \\
      w/o adaptive graph & 0.8632 & 0.503  & 0.625 & 0.9325 &0.720 &0.911 \\
      w/o density information & 0.8832 & 0.556 & \textbf{0.687} & 0.9291 &\textbf{0.766} &0.923 \\
      w/o length selection &  0.8939 & 0.478 & 0.620 & 0.9331  & 0.762 & 0.916 \\
      fixed length & 0.8965 &  0.487& 0.654 & 0.9247 & 0.741  & 0.900\\
    \midrule
      GraphSubDetector &\textbf{0.9134} & \textbf{0.560} & 0.663 & \textbf{0.9562} &0.755 & \textbf{0.928} \\
    \bottomrule
    \end{tabular}\label{table:ablation}
}
\end{table}

\subsection{Ablation Studies}

To better understand the effectiveness of each component in GraphSubDetector, we perform ablation studies on $\mathrm{UCR}$ and $\mathrm{UCR\mbox{-}Aug}$ datasets. We summarize the performance of different model variations in Table \ref{table:ablation}, where
    1) the variation ``w/o graph'' denotes the representations from TCN feature encoder and length selection mechanism are used for anomaly detection without message passing;
    2) the variation ``w/o adaptive graph'' denotes the learned adaptive adjacency matrix is not used in message passing, while the $\mathbf{A}^{\mathrm{prior}}$ is used;
    3) the variation ``w/o density information'' denotes the learned adaptive adjacency matrix is not refined with density information, and the $\mathbf{A}$ is used instead of $\mathbf{\hat A}$;
    4) the variation ``w/o length selection'' denotes multi-length representations are averaged before input to DAGNN, i.e., different lengths are treated equally;
    5) the variation ``fixed length'' denotes that we do not use the multi-length representations but select a fixed subsequence length (which is set to be the period length for periodic time series and a constant $160$ for the non-periodic).

Results in Table \ref{table:ablation} demonstrate the advantages of each component in GraphSubDetector. It is important to mention that the performance of the variation ``w/o graph'' which does not perform message passing degrades a lot, suggesting the significance of robust representation learning via graph neural networks.



 \begin{figure*}[!t]
    \centering
    \includegraphics[width=0.7\textwidth]{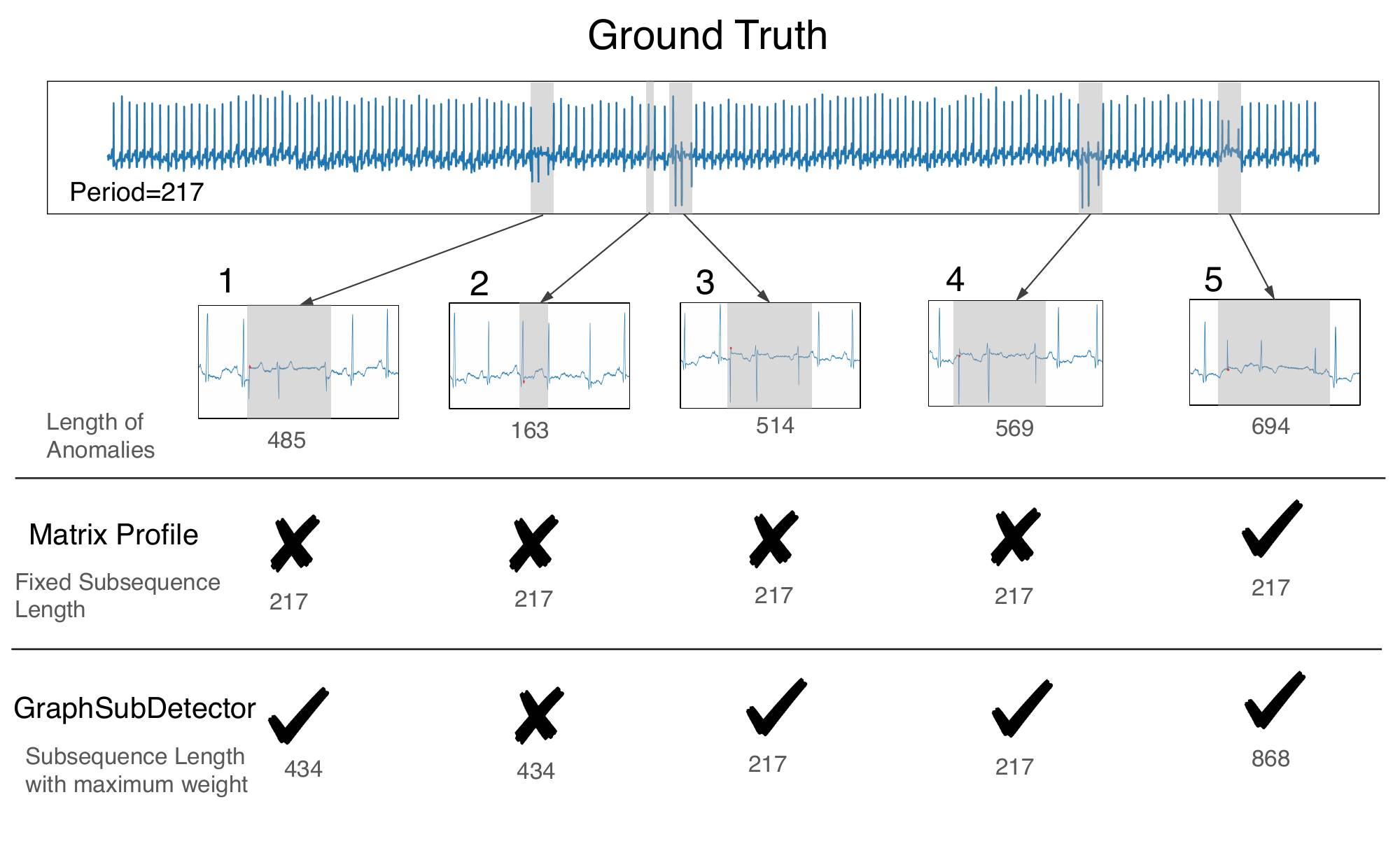}\vspace{-4mm}
    \caption{Case studies and visualization of GraphSubDetector and Matrix Profile algorithms for subsequence anomaly detection with different lengths of subsequence anomalies and recurring anomalies.}
    \label{fig:case1} 
\end{figure*}
\subsection{Case Studies and Visualization}
Here we conduct typical case studies and visualization to investigate how well GraphSubDetector addresses issues in subsequence anomaly detection problems, including handling recurring anomalies with similar patterns and selecting proper subsequence length.
Figure \ref{fig:case1} illustrates a heartbeat time series in $\mathrm{UCR\mbox{-}Aug}$. There are 5 subsequence anomalies (marked in grey), where Anomaly 3 and 4 share a similar anomalous pattern. For Matrix Profile, we set a fixed period length for its prior input length of subsequence anomaly. Matrix Profile finds 1 of 5 anomalies in its top-5 scored subsequences and it cannot deal with recurring anomalies, while GraphSubDetector finds 4 of 5 anomalies. It can be seen that the GraphSubDetector algorithm detects recurring anomalous subsequences much better than the Matrix Profile algorithm. Furthermore, in GraphSubDetector, we mark the length with the maximum weight learned by the selection mechanism. It shows that this mechanism can adaptively learn different subsequence lengths for better anomaly detection.
Therefore, we can see that the existing subsequence anomaly detection algorithm like Matrix Profile needs to set a fixed length prior for subsequence anomaly detection, which can hardly account for anomalies of different types. In contrast, the length selection mechanism of the GraphSubDetector selects appropriate subsequence lengths to achieve better detection performance.

\subsection{Model Complexity and Efficiency}
In this subsection, we discuss the complexity of GraphSubDetector. In the architecture of GraphSubDetector, kNN-graph learning is adopted. The graph structure in the learning phase is fixed as we initialize it with time series discord as prior, and only top-k important edges are preserved for each node. As a result, the complexity of learning the adjacency matrix is $O(kN)$ (where N is the number of nodes), which is linear. 
Meanwhile, the scalability of GraphSubDetector is also evaluated and plotted in Figure~\ref{fig:Model efficiency}. Since the time performance of GraphSubDetector is independent of the data quality or any other inputs except the number of subsequences, our experiment uses fixed periodicity and stride parameters, thus the number of subsequences is proportional to the data length. 
From Figure~\ref{fig:Model efficiency}, we can see that the processing time of GraphSubDetector roughly increases linearly with respect to the data length, and the processing time is less than 45 seconds for time series with length 300k. Therefore, GraphSubDetector is efficient and applicable for time series subsequence anomaly detection in most real-world scenarios.

Moreover, we compare the running time of GraphSubDectector and existing time series anomaly detection algorithms in Table~\ref{table_runningtime}. As different methods work on different devices (CPU or GPU), here we unrigorously report the running time of different methods which worked on a time series of length 34.3k. Deep models run on Nvidia Tesla-V100-32G, and non-deep models run on a 2.3GHz@12cores CPU. It can be seen that GraphSubDectector is faster than previous works. We also investigate the different key modules in GraphSubDectector for running time. It can be seen that the two modules only introduce a small amount of computational overhead, as the length selection mechanism only additionally uses multiple pooling and dot product operations and the DAGNN additionally uses vector sum operation. These two modules of GraphSubDectector are important in realistic applications, as they are designed for complicated anomalies with variable types and lengths which are rare but significant.

\begin{figure}[t]
\begin{center}
\includegraphics[width=0.43\textwidth]{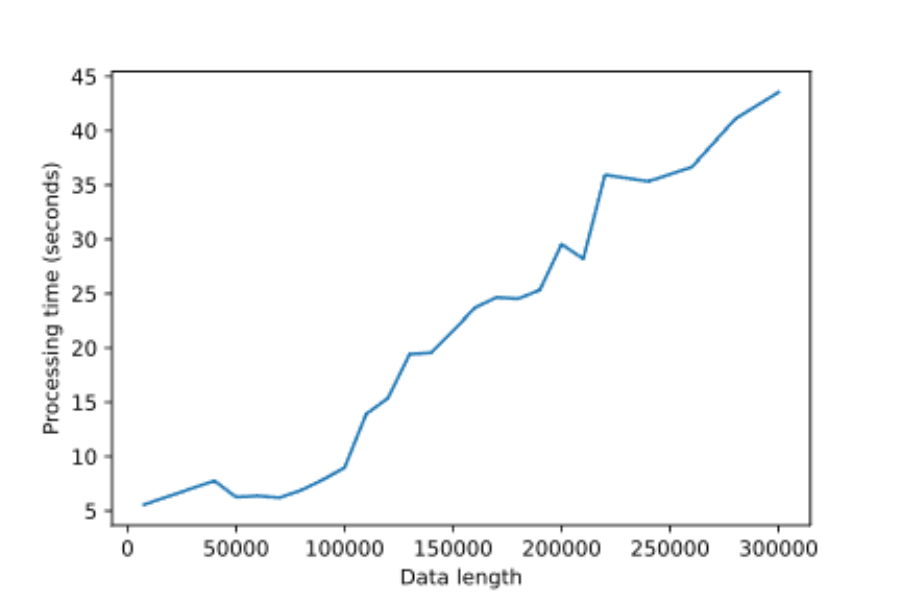}
\end{center}
\caption{Model efficiency of GraphSubDetector.}
\label{fig:Model efficiency}
\end{figure}

\begin{table}[t]
\centering
\caption{Running time comparison of different algorithms.}
\scalebox{0.8}{
 \begin{tabular}{c| c }
    \toprule
        Algorithm & Running time (seconds)   \\
    \midrule
        LOF & 479.1 \\
        Matrix Profile & 317.2  \\
        NCAD  & 68.4   \\
            \midrule
        GraphSubDetector w/o length selection mechanism & 19.8   \\
        GraphSubDetector w/o density-aware GNN & 21.2   \\
        GraphSubDetector & 23.2 \\
    \bottomrule
    \end{tabular}\label{table_runningtime}
}
\end{table}

\subsection{Density-Aware GNN}

Figure \ref{fig_dagnn} plots the subsequence representations of GraphSubDetector by t-SNE~\cite{van2008visualizing} to demonstrate the benefits of the density-aware GNN (DAGNN) component in GraphSubDetector.
Results in Figure \ref{fig_dagnn}(a) are without GNN, results in Figure \ref{fig_dagnn}(b) use adjacency matrix corresponding to Theorem 4.1 and results in Figure \ref{fig_dagnn}(c) correspond to Theorem 4.2. The normal representations of adaptive GNN output have more concentration than the model without GNN, and the proposed DAGNN can generate more compact normal representations which we call noise removal. Besides, the anomalies are relatively distant to the normal samples with DAGNN. The results are consistent with Theorem 4.1 and 4.2. Thus, representations corresponding to normal and abnormal samples using DAGNN in GraphSubDetector are more discriminative, eventually facilitating anomaly detection and improving performance.

\begin{figure}[h]
	\centering
	\subfloat[Representations without GNN]{
		\begin{minipage}[b]{0.4\linewidth}
			\includegraphics[width=1\textwidth]{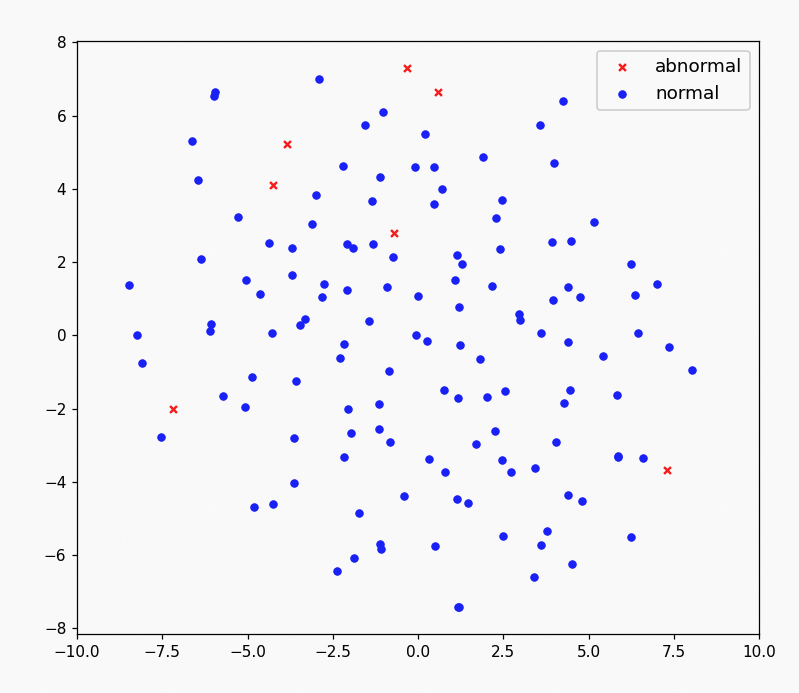}
		\end{minipage}
	}
    \quad \quad 
	\subfloat[Representations with Adaptive GNN (no density information)]{
		\begin{minipage}[b]{0.4\linewidth}
			\includegraphics[width=1\textwidth]{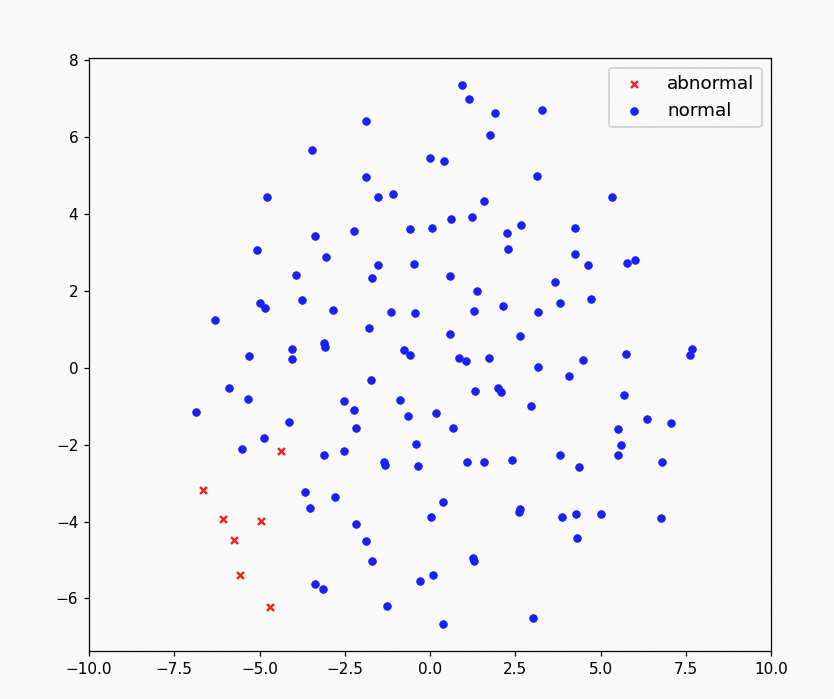}
		\end{minipage}
	}
    \quad
    \subfloat[Representations with Density-Aware GNN]{
		\begin{minipage}[b]{0.4\linewidth}
			\includegraphics[width=1\textwidth]{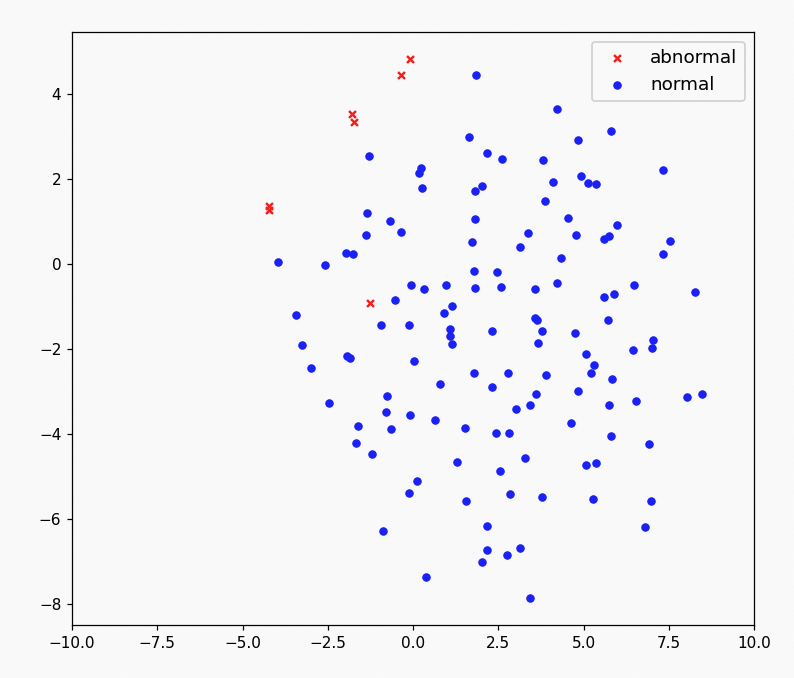}
		\end{minipage}
	}
	\caption{A t-SNE plot of the subsequence representations of GraphSubDetector. Blue dots denote normal samples, and red crosses denote abnormal samples.}
	\label{fig_dagnn}
\end{figure}

\subsection{Performance Variance}

In this part, we present standard deviations of GraphSubDetector's results and report the comparison with the best baseline (NormA) as shown in Table \ref{table:public_dats} of Section~\ref{sec_expe}. We repeat the experiments 5 times with different random seeds. It can be seen that GraphSubDetector has low variance with stable performance.

\begin{table*}[t]
\centering
\caption{Performance variance of GraphSubDetector. The best ones are in bold.}
\scalebox{0.75}{
 \begin{tabular}{c| cc|cc|cc |cc|cc }
    \toprule
       Dataset & \multicolumn{2}{|c|}{$\mathrm{UCR}$} &\multicolumn{2}{|c|}{$\mathrm{UCR\mbox{-}Aug}$} &\multicolumn{2}{|c|}{$\mathrm{SED}$} &\multicolumn{2}{|c}{$\mathrm{IOPS}$}&\multicolumn{2}{|c}{$\mathrm{ECG}$}\\
    \midrule
      Metric  &AUC  & VUS     & AUC & VUS      &AUC & VUS     & AUC & VUS        & AUC & VUS   \\
    \midrule
      NORMA & {0.874} & {0.872} & {0.942} & {0.935} & \textbf{0.998} & \textbf{0.994} & {0.982} & \textbf{0.978} & 0.576& 0.569  \\
      \!GraphSubDetector\! & \textbf{0.913  \!$\pm$\! 0.02} & \textbf{0.907 \!$\pm$\! 0.02} & \textbf{0.956 \!$\pm$\! 0.02} & \textbf{0.945 \!$\pm$\! 0.03}& 0.995  \!$\pm$\! 0.01 & 0.991  \!$\pm$\!0.00 & \textbf{0.985  \!$\pm$\! 0.02} & {0.967   \!$\pm$\! 0.02} & \textbf{0.923 \!$\pm$\! 0.03} & \textbf{0.919  \!$\pm$\! 0.05}\\
    \bottomrule
    \end{tabular}\label{table:public_dats}
}
\end{table*}

\begin{table*}[t]
    \centering
     \caption{Sensitivity of GraphSubDetector for regularization $\mathcal{L}_{\mathrm{dec}}$ and $\mathcal{L}_{\mathrm{Len}}$ in model training.}
    \scalebox{0.85}{
    \begin{tabular}{c|c |c |c |c|c|c|c|c|c|c|c|c|c}
        \toprule
        $\lambda$ & 0 & 0 &0 &0 &0 &0  &0.1 &0.2 &0.5 &1 &2 &1 &1 \\
        $\mu$ & 0 & 0.1 &0.2 &0.5 &1 &2 & 0 &0 &0 &0 &0 &1 &0.2  \\
        \midrule
         $\mathrm{UCR}$ & 0.894 &0.902 &0.913 &0.911 &0.913 &0.909 &0.896 &0.887 &0.907 &0.9 &0.892 &0.909 &0.913 \\
          $\mathrm{SMD}$ & 0.901 &0.894 &0.885 &0.911 &0.911 &0.899 &0.894 &0.891 &0.902 &0.891 &0.905 &0.914 &0.912 \\
        \bottomrule
    \end{tabular}
}
    \label{tab:model training sensitivity}
\end{table*}

\subsection{Hyperparameter Sensitivity}
Our proposed GraphSubDetector involves a few parameters and model specifications. We use $\mathrm{ECG}$ dataset to examine the hyperparameter sensitivity, including the impact of hidden dimensions $d$, the number of neighbors $K$, the initial maximum subsequence length $T$, and the number of GNN layers. The results are plotted in Figure~\ref{fig:exp sensitivity}. 
As for the hyperparameters $\lambda$ and $\mu$ in Eq. \ref{eq_training}, we run sensitivity experiments on $\mathrm{UCR}$ and $\mathrm{SMD}$ datasets for examples and report the numerical results in Table~\ref{tab:model training sensitivity}.

It can be seen that the GraphSubDetector can maintain its performance with a wide range of parameter values. Also, Figure~\ref{fig:exp sensitivity} and Table~\ref{tab:model training sensitivity} provide us insight into selecting the hyperparameter configuration for the performance-complexity tradeoff.


\begin{figure}[t]
\begin{center}
\includegraphics[width=0.5\textwidth]{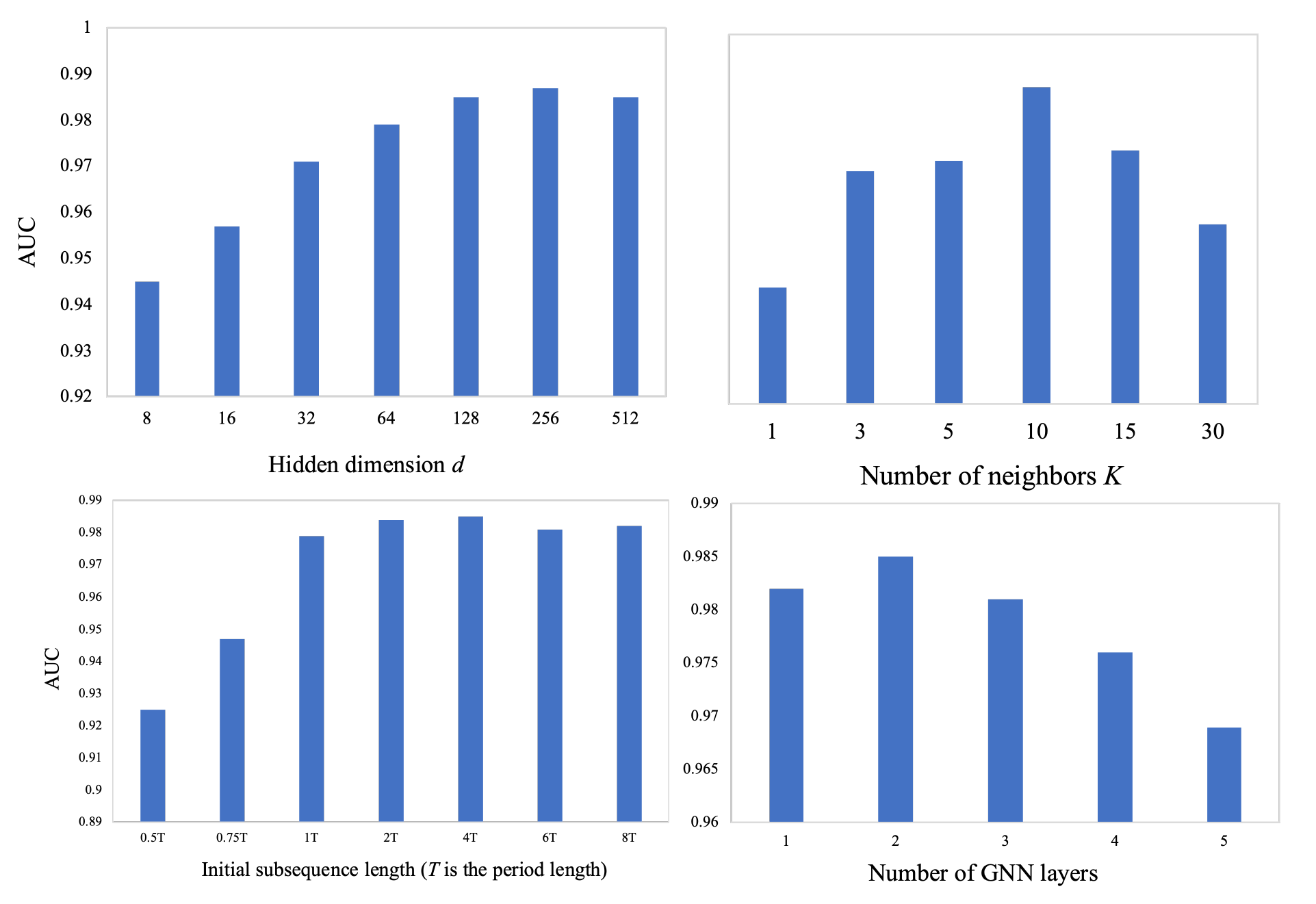}
\end{center}
\caption{Hyperparameters Sensitivity of GraphSubDetector.}
\label{fig:exp sensitivity}
\end{figure}

\subsection{Anomaly Types Injection}
In the proposed GraphSubDetector, we design an anomaly injection module to augment different types of labeled anomalous subsequences in the model learning procedure for performance improvement. In this subsection, we conduct experiments to investigate the effects of different types of injected anomalies on $\mathrm{UCR}$ and $\mathrm{UCR\mbox{-}Aug}$ datasets, and summarize the results in Table~\ref{tab:anomaly_types}. 

From Table~\ref{tab:anomaly_types}, we can find that each type of anomaly can improve performance to some extent. Overall, the GraphSubDetector achieves better performance improvements by utilizing all six different types of anomaly injection than using a single type. As discussed before, since the labeled anomaly data are rare for training in most situations, anomaly injection prompts the model to distinguish these obvious anomalies from normal data and thus generate compact normal representations, which leads to performance enhancement.

\begin{table}[h]
    \centering
    \caption{Effects of different types anomaly injection in GraphSubDetector. } 
    \scalebox{0.85}{
        \begin{tabular}{c|c |c}
        \toprule
            Datasets & $\mathrm{UCR}$ & $\mathrm{UCR\mbox{-}Aug}$  \\
            \midrule
            without anomaly injection & 0.902 & 0.924 \\
            spike$\&$dip & 0.910 & 0.943 \\
            resizing &0.907 & 0.937 \\
            warping & 0.909 &0.930 \\
            noise injection &0.907 & {0.958} \\
            left-side right &{0.911} &0.929 \\
            up-side down & 0.909 &0.938 \\
            all-types anomaly injection (GraphSubDetector) & {0.913} & {0.956} \\
        \bottomrule
        \end{tabular}
    }
    \label{tab:anomaly_types}
\end{table}



\section{Conclusion}

In this paper, we aim to bridge the gap between time series discords and deep learning-based TSAD methods and propose a novel subsequence anomaly detection approach named GraphSubDetector. 
GraphSubDetector innovates subsequence anomaly detection from two perspectives: 1) GraphSubDetector adaptively learns a proper subsequence length through a length selection mechanism by considering multi-length views of subsequences that highlight the data characteristics; 2) With density-aware adaptive graph neural networks (DAGNN), GraphSubDetector can further learn robust subsequence representations against variance of normal data. Both improvements benefit the subsequence anomaly detection problem.
Experiments on multiple benchmark datasets demonstrate the superior performance of the proposed model. 


\newpage
\bibliographystyle{ACM-Reference-Format}
\bibliography{reference}


\newpage
\appendix
\onecolumn


\section{Appendix}
\subsection{Theoretical Analysis} \label{appendix:theory}

Let $N, M$ denote the number of normal and anomalous data samples, respectively, $N \gg M$, and $d$ denotes the dimension of the features. Assume that normal samples $\mathbf{f}_1^{\mathrm{nor}}, \cdots, \mathbf{f}_N^{\mathrm{nor}} \sim \mathcal{N}(\boldsymbol{\mu}, \sigma^2)$ and anomalous samples deviate from normal ones. The generation procedure of anomalous samples is $\mathbf{f}_i^{\mathrm{ano}} = \mathbf{f}_i^{\mathrm{ref}} + \boldsymbol{\epsilon}_i$, where $1 \leq i \leq M$, $\mathbf{f}_i^{\mathrm{ref}} \sim \mathcal{N}(\boldsymbol{\mu}, \sigma^2)$ is the reference normal sample of the $i$-th anomaly, $\| \boldsymbol{\epsilon}_i \| = K \sigma$ is the corresponding nontrivial anomalous deviation, and $K > 0$. 

\begin{theorem} \label{theorem_1_copy}
 Define a fully-connected graph $\mathcal{G}$ of data samples with adjacency matrix $\mathbf{A} \in \mathbb{R} ^ {(N+M) \times (N+M)}$ encoding similarity measurement of pair-wise samples, and the entry $\mathbf{A}_{ij} = \exp{(-\|\mathbf{f}_i - \mathbf{f}_j \| ^ 2 / \delta})$, which transforms Euclidean distance between features with a Gaussian kernel. Denote $\mathbf{F} \in \mathbb{R} ^{(N+M) \times d}$ as the feature matrix of all samples, $\mathbf{F}_{\star \mathcal{G}} = \mathbf{D}^{-1}\mathbf{A}\mathbf{F}$ is the feature matrix after message passing w.r.t. graph $\mathcal{G}$, and $\sigma_{\star \mathcal{G}}$ is the standard deviation of the newly generated features. 
 Then for any $K > 0$, there exist  $\delta > 0$ that for each anomalous sample
 \begin{equation}
     \frac{\| \mathbf{f}_{i, \star \mathcal{G}}^{\mathrm{ano}} - \mathbf{f}_{i, \star \mathcal{G}}^{\mathrm{ref}} \|}{\sigma_{\star \mathcal{G}}} > \frac{\|\mathbf{f}_i^{\mathrm{ano}} - \mathbf{f}_i^{\mathrm{ref}} \|}{\sigma}.
 \end{equation}
\end{theorem}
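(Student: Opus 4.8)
The plan is to compute the post-message-passing location of each sample explicitly under the Gaussian population model and to show that the key effect is a \emph{density-dependent} contraction toward the normal mean $\boldsymbol{\mu}$ that acts more strongly on normal (dense) samples than on anomalous (sparse) ones. Writing the row-normalized update for a sample sitting at $\mathbf{x}$ and isolating the self-loop term $\mathbf{A}(\mathbf{x},\mathbf{x})=1$, I would express
\begin{equation}
    g(\mathbf{x}) = \frac{\mathbf{x} + \sum_{j} \exp(-\|\mathbf{x}-\mathbf{f}_j\|^2/\delta)\,\mathbf{f}_j}{1 + \sum_{j}\exp(-\|\mathbf{x}-\mathbf{f}_j\|^2/\delta)},
\end{equation}
where the sums run over the remaining samples. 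Since $N \gg M$, I would replace the empirical sum over the $N$ normal samples by its expectation under $\mathcal{N}(\boldsymbol{\mu},\sigma^2)$ (law of large numbers), treating the $M$ anomalous terms as lower-order corrections.

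The first main step is a Gaussian-conjugacy computation. Multiplying the kernel $\exp(-\|\mathbf{x}-\mathbf{y}\|^2/\delta)$ by the normal density and completing the square yields two closed forms: a normalization $Z(\mathbf{x}) = N\,c(\mathbf{x})$ with local density $c(\mathbf{x}) \propto \exp\!\big(-\|\mathbf{x}-\boldsymbol{\mu}\|^2/(2\sigma^2+\delta)\big)$, and a weighted mean $\alpha\mathbf{x} + (1-\alpha)\boldsymbol{\mu}$ with $\alpha = 2\sigma^2/(2\sigma^2+\delta)\in(0,1)$. Substituting back, every sample obeys $g(\mathbf{x}) = (1-\gamma(\mathbf{x}))\,\mathbf{x} + \gamma(\mathbf{x})\,\boldsymbol{\mu}$ with $\gamma(\mathbf{x}) = (1-\alpha)\,Nc(\mathbf{x})/(1+Nc(\mathbf{x}))$, which is strictly increasing in the local density $c(\mathbf{x})$. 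This is the crux: the weight-$1$ self-loop is \emph{not} diluted in sparse regions, so an anomaly, whose smoothed density $c(\mathbf{f}^{\mathrm{ano}})$ is exponentially smaller than the bulk value $c_n$, contracts with a strictly smaller factor $\gamma_a < \gamma_n$ than a normal sample.

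It then remains to assemble the ratio. For the denominator, normal samples transform with the bulk factor, so $\sigma_{\star\mathcal{G}} \approx (1-\gamma_n)\sigma$. For the numerator I would use $\mathbf{f}_i^{\mathrm{ano}} = \mathbf{f}_i^{\mathrm{ref}} + \boldsymbol{\epsilon}_i$ to obtain
\begin{equation}
    g(\mathbf{f}_i^{\mathrm{ano}}) - g(\mathbf{f}_i^{\mathrm{ref}}) = (1-\gamma_a)\,\boldsymbol{\epsilon}_i + (\gamma_r - \gamma_a)\,(\mathbf{f}_i^{\mathrm{ref}} - \boldsymbol{\mu}),
\end{equation}
with $\gamma_r \approx \gamma_n$. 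Taking $\mathbf{f}_i^{\mathrm{ref}}$ as a typical normal sample (so the second term is a lower-order $O(\sigma)$ contribution, vanishing when the reference sits at $\boldsymbol{\mu}$), the norm is $\approx (1-\gamma_a)\|\boldsymbol{\epsilon}_i\| = (1-\gamma_a)K\sigma$, and the ratio becomes $\tfrac{1-\gamma_a}{1-\gamma_n}K > K$ precisely because $\gamma_a < \gamma_n$. Choosing $\delta$ (for the given $K$) moderate relative to $(K\sigma)^2$ guarantees the anomaly genuinely lies in a strictly sparser region, i.e.\ a strict gap $\gamma_n - \gamma_a > 0$, which delivers the claimed strict inequality.

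The main obstacle I anticipate is rigor in two places: first, controlling the cross term $(\gamma_r - \gamma_a)(\mathbf{f}_i^{\mathrm{ref}} - \boldsymbol{\mu})$ so it cannot cancel the gain from $(1-\gamma_a) > (1-\gamma_n)$, handled via concentration of $\|\mathbf{f}_i^{\mathrm{ref}}-\boldsymbol{\mu}\|$ together with a choice of $\delta$ making the density gap dominate; and second, justifying the replacement of the empirical weighted sums by their Gaussian expectations and the neglect of the $M$ anomalous terms, which needs standard large-$N$ concentration plus $N \gg M$. Conceptually the essential point to get right — and the reason a naive argument fails — is that the self-loop $\mathbf{A}_{ii}=1$ is indispensable: without it the weighted average contracts \emph{every} point by the same factor $\alpha$, the ratio is exactly $K$, and no separation is gained.
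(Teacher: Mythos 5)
Your argument is essentially correct and reaches the stated conclusion, but it is a genuinely different route from the paper's. The paper works sample-wise and perturbatively: it uses the high-dimensional orthogonality of $\boldsymbol{\epsilon}$ to $\mathbf{f}^{\mathrm{ref}}-\mathbf{f}_j$ to get the kernel-weight identity $a_j^{\mathrm{ano}}\approx a_j^{\mathrm{ref}}e^{-K^2\sigma^2/\delta}$, lower-bounds $\|\mathbf{f}^{\mathrm{ano}}_{\star\mathcal{G}}-\mathbf{f}^{\mathrm{ref}}_{\star\mathcal{G}}\|$ directly from the two weighted averages, exhibits an explicit upper bound on $\delta$ under which the numerator does not collapse, and then only \emph{qualitatively} asserts that averaging reduces $\sigma_{\star\mathcal{G}}$. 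You instead pass to the mean-field limit, evaluate the Gaussian convolution in closed form, and obtain the uniform contraction law $g(\mathbf{x})=(1-\gamma(\mathbf{x}))\mathbf{x}+\gamma(\mathbf{x})\boldsymbol{\mu}$ with $\gamma$ monotone in the local density $Nc(\mathbf{x})$; this puts numerator and denominator on the same footing and yields the clean quantitative ratio $\frac{1-\gamma_a}{1-\gamma_n}K$. What your approach buys is twofold: it makes the variance-reduction half of the argument (which the paper leaves as a one-line assertion) explicit and comparable to the numerator, and it isolates the undiluted self-loop $\mathbf{A}_{ii}=1$ as the actual source of the strict inequality — without it every point contracts by the same $\alpha$ and the ratio is exactly $K$ — a mechanism the paper's proof uses implicitly but never identifies, and which also explains directly why the density reweighting of Theorem 2 amplifies the effect. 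What the paper's route buys is an explicit admissible range for $\delta$ and no reliance on replacing empirical sums by expectations.

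One caution: your existence claim for $\delta$ needs the normal-sample density mass $Nc_n\propto N\exp(-\|\mathbf{x}-\boldsymbol{\mu}\|^2/(2\sigma^2+\delta))$ to stay bounded away from zero (so that $\gamma_n$ is strictly positive), while the density gap $c_a/c_n=\exp(-K^2\sigma^2/(2\sigma^2+\delta))$ wants $\delta$ not too large; in high dimension $\|\mathbf{x}-\boldsymbol{\mu}\|^2\approx d\sigma^2$, so these two requirements pull $\delta$ in opposite directions and the window of admissible $\delta$ depends on $N$, $d$, and $K$. Since $\gamma$ is strictly increasing in $c$ and $c_a<c_n$ strictly for every finite $\delta$, a strict (possibly small) gap always survives, but you should state this balance explicitly — it is the counterpart of the paper's explicit upper bound on $\delta$ — and likewise note that the cross term $(\gamma_r-\gamma_a)(\mathbf{f}^{\mathrm{ref}}-\boldsymbol{\mu})$ is handled by the same orthogonality heuristic the paper already invokes, so it only adds to the numerator rather than cancelling the gain.
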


\begin{theorem} \label{theorem_2_copy}
 Define fully connected graph $\mathcal{\hat{G}}$ with adjacency matrix $\mathbf{\hat{A}}'$, and $\mathbf{\hat{A}}_{ij} = \exp{(-\|\mathbf{f}_i - \mathbf{f}_j \| ^ 2 / \delta} - \| \mathbf{f}_i - \boldsymbol{\mu} \| ^ 2 / c ) $. $\mathbf{F}_{\star \mathcal{\hat{G}}} = \mathbf{\hat{D}}^{-1}\mathbf{\hat{A}}\mathbf{F}$ is the feature matrix after message passing w.r.t. graph $\mathcal{\hat{G}}$, and $\sigma_{\star \mathcal{\hat{G}}}$ is the standard deviation of the newly generated features. 
 Then for any $\delta > 0$, there exist  $c > 0$ that for each anomalous sample
 \begin{equation}
     \frac{\| \mathbf{f}_{i, \star \mathcal{\hat{G}}}^{\mathrm{ano}} - \mathbf{f}_{i, \star \mathcal{\hat{G}}}^{\mathrm{ref}} \|}{\sigma_{\star \mathcal{\hat{G}}}} > \frac{\| \mathbf{f}_{i, \star \mathcal{G}}^{\mathrm{ano}} - \mathbf{f}_{i, \star \mathcal{G}}^{\mathrm{ref}} \|}{\sigma_{\star \mathcal{G}}}.
 \end{equation}
\end{theorem}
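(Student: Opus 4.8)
The plan is to treat $\hat{\mathcal{G}}$ as a one-parameter deformation of $\mathcal{G}$ and show that switching on the density factor strictly improves the ratio. I write $\beta = 1/c \ge 0$, so that $\hat{\mathbf{A}}_{ij} = \mathbf{A}_{ij}\,w_j$ with source weight $w_j = \exp(-\beta\|\mathbf{f}_j - \boldsymbol{\mu}\|^2)$ (the density factor acts on the \emph{source} node $j$, as in the discussion following the statement), and $\beta = 0$ recovers $\mathcal{G}$ exactly. It therefore suffices to prove that $R(\beta) := \|\mathbf{f}_{i,\star\hat{\mathcal{G}}}^{\mathrm{ano}} - \mathbf{f}_{i,\star\hat{\mathcal{G}}}^{\mathrm{ref}}\|/\sigma_{\star\hat{\mathcal{G}}}$ satisfies $R(\beta) > R(0)$ for some $\beta > 0$; any such $\beta$ yields a finite admissible $c = 1/\beta$, and $R(0)$ is precisely the right-hand side of the claimed inequality. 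Since the deformation is smooth in $\beta$, it is enough to establish $\tfrac{d}{d\beta}\log R(\beta)\big|_{\beta=0} > 0$ and integrate over a short interval.

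First I would record, for each receiving node, the self-weighted decomposition already used in the companion result (Theorem~\ref{theorem_1_copy}): with diagonal $\hat{\mathbf{A}}_{ii}=w_i$, the aggregated feature is the convex combination $\mathbf{f}_{i,\star\hat{\mathcal{G}}} = \lambda_i\,\mathbf{f}_i + (1-\lambda_i)\,\mathbf{m}_i$, where $\mathbf{m}_i$ is the density-weighted mean of the neighbours and $\lambda_i = w_i/(w_i + \sum_{j\neq i}\hat{\mathbf{A}}_{ij})$ is the mass the node keeps on itself. Using $N \gg M$, I would replace the normal-source sums by their Gaussian expectations (the single anomaly contributes negligibly to any normal node's aggregation and to $\sigma_{\star\hat{\mathcal{G}}}$), so that for a normal receiver $\mathbf{m}_i \approx \boldsymbol{\mu}$ and $\lambda_i$ is small, whereas for the isolated anomaly the self-coefficient $\lambda_{\mathrm{ano}}$ is nontrivial because its Gaussian kernel to the bulk is tiny. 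This is exactly the mechanism that keeps the numerator alive while the denominator collapses.

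The crux is the sign of the derivative. Differentiating at $\beta = 0$, the weight $w_j$ upweights the dense core of the normal cluster relative to its tails, so to first order it strictly contracts the spread of the transformed normal features, giving $\tfrac{d}{d\beta}\log\sigma_{\star\hat{\mathcal{G}}}\big|_0 < 0$. For the numerator, the anomaly's far-from-$\boldsymbol{\mu}$ self-loop carries $w_{\mathrm{ano}}$, which is small and decays fastest, so the gap shrinks too; the key inequality to prove is that this relative shrink is strictly slower than that of the standard deviation, i.e. $\tfrac{d}{d\beta}\log\|\mathbf{f}_{i,\star\hat{\mathcal{G}}}^{\mathrm{ano}} - \mathbf{f}_{i,\star\hat{\mathcal{G}}}^{\mathrm{ref}}\|\big|_0 > \tfrac{d}{d\beta}\log\sigma_{\star\hat{\mathcal{G}}}\big|_0$. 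Summing the two contributions gives $\tfrac{d}{d\beta}\log R\big|_0 > 0$, and integrating delivers the strict inequality for a suitable finite $c$.

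I expect the main obstacle to be precisely this rate comparison. A tempting shortcut is to note that $w_j$ times the normal density is proportional to a narrower Gaussian $\mathcal{N}(\boldsymbol{\mu},\tilde\sigma^2)$ with $\tilde\sigma^2 = \sigma^2/(1+2\sigma^2/c) < \sigma^2$, so $\hat{\mathcal{G}}$ looks like $\mathcal{G}$ run against a tighter source cluster; but such a population-level aggregation is affine in the query and scales the gap and the standard deviation by the \emph{same} factor, leaving the ratio invariant, so it cannot by itself deliver the strict gain. The improvement can only come from the finite-sample self-loop/isolation structure of the anomaly, which must be retained rather than averaged away. The delicate points are controlling the concentration error incurred when replacing the normal-source sums by Gaussian integrals, and handling the coupling between the perturbed self-coefficient $\lambda_{\mathrm{ano}}$ and the neighbour mean $\mathbf{m}_{\mathrm{ano}}$ when differentiating the gap.
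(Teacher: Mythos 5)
Your proposal is a plan rather than a proof, and the decisive step is exactly the one you leave open. The paper's own argument for Theorem~\ref{theorem_2_copy} is a single sentence appended to the proof of Theorem~\ref{theorem_1_copy}: the density factor ``further reduces'' $\sigma_{\star\hat{\mathcal{G}}}$ by ``moving all samples to $\boldsymbol{\mu}$,'' hence the ratio grows. In other words, the paper argues only about the denominator and tacitly assumes the numerator is unaffected. Your observation that the population-level effect of the weight $w_j=\exp(-\beta\|\mathbf{f}_j-\boldsymbol{\mu}\|^2)$ is merely to replace the source cluster by a narrower Gaussian --- an affine shrinkage toward $\boldsymbol{\mu}$ that scales the gap $\|\mathbf{f}^{\mathrm{ano}}_{\star}-\mathbf{f}^{\mathrm{ref}}_{\star}\|$ and $\sigma_{\star}$ by the \emph{same} factor --- is a correct and valuable critique that applies to the paper's proof as written, not just to your own shortcut. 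Likewise, your reading of the density factor as attached to the source node $j$ is the only non-vacuous one (a factor depending solely on the receiving index $i$ cancels under the row normalization $\hat{\mathbf{D}}^{-1}\hat{\mathbf{A}}$), and it matches the paper's discussion even though the theorem statement writes $\mathbf{f}_i$.

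The gap is that the inequality $\tfrac{d}{d\beta}\log\|\mathbf{f}^{\mathrm{ano}}_{i,\star\hat{\mathcal{G}}}-\mathbf{f}^{\mathrm{ref}}_{i,\star\hat{\mathcal{G}}}\|\big|_{0}>\tfrac{d}{d\beta}\log\sigma_{\star\hat{\mathcal{G}}}\big|_{0}$ is announced as ``the key inequality to prove'' but never established, and it is not obviously true under the model's own assumptions. Since $\|\mathbf{f}^{\mathrm{ano}}-\boldsymbol{\mu}\|^2\approx\|\mathbf{f}^{\mathrm{ref}}-\boldsymbol{\mu}\|^2+K^2\sigma^2$, the anomaly is the point farthest from $\boldsymbol{\mu}$, so its self-loop weight $w_{\mathrm{ano}}$ decays \emph{faster} in $\beta$ than the weights of the normal sources; the self-coefficient $\lambda_{\mathrm{ano}}$ therefore \emph{decreases} as the density factor is switched on, pulling the anomaly harder toward the bulk and shrinking the numerator beyond the common affine factor. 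Whether this loss is outweighed by the extra contraction of $\sigma_{\star\hat{\mathcal{G}}}$ (which, to first order, comes only from reweighting the tails of the normal cluster and is itself an $O(1/N)$-type correction on top of the invariant affine part) is precisely the quantitative comparison your plan defers. Until you compute both first-order terms explicitly --- the perturbation of $\lambda_{\mathrm{ano}}$ and of the neighbour mean $\mathbf{m}_{\mathrm{ano}}$ against the perturbation of the transformed normal spread --- and exhibit a regime of $\delta$, $K$, $N$, $d$ in which the sign is as claimed, the proposal does not prove the theorem; it only shows, more clearly than the paper does, where a proof would have to do real work.
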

\begin{proof} of Theorem \ref{theorem_1_copy} and \ref{theorem_2_copy} 

First, as $N \gg M$, the anomalous samples will not influence the statistics of the joint set of data samples. Next we analyze a specific anomaly sample $\mathbf{f}^{\mathrm{ano}} = \mathbf{f}^{\mathrm{ref}} + \boldsymbol{\epsilon}$. Let $a^{\mathrm{ano}}_j = \exp{(-\|\mathbf{f}^{\mathrm{ano}} - \mathbf{f}_j \| ^ 2 / \delta})$ and $a^{\mathrm{ref}}_j = \exp{(-\|\mathbf{f}^{\mathrm{ref}} - \mathbf{f}_j \| ^ 2 / \delta})$ satisfying the above similarity definition. As the anomalous data error $\boldsymbol{\epsilon}$ is independent with $\mathbf{f}^{\mathrm{ref}} - \mathbf{f}_j$, in high-dimensional feature space, the inner product $\boldsymbol{\epsilon}(\mathbf{f}^{\mathrm{ref}} - \mathbf{f}_j) \approx 0$ holds.
Thus, we have
 \begin{align}
    a^{\mathrm{ano}}_j &= \exp{(-\|\mathbf{f}^{\mathrm{ano}} - \mathbf{f}_j \| ^ 2 / \delta)} \notag\\
    &= \exp{(-\|\mathbf{f}^{\mathrm{ref}} - \mathbf{f}_j - \boldsymbol{\epsilon} \| ^ 2 / \delta)} \notag\\
    &\approx a^{\mathrm{ref}}_j  \exp{(- (K \sigma)^2 / \delta)}.
 \end{align}
Assume that the independence property of anomalous data error still holds after message passing. We can further derive that,
 \begin{align} \notag
    \| \mathbf{f}_{\star \mathcal{G}}^{\mathrm{ano}} - \mathbf{f}_{\star \mathcal{G}}^{\mathrm{ref}} \| ^ 2 \!&=\!
    \left\| \frac{ \mathbf{f}^{\mathrm{ano}}+\sum_{j=1}^{N} a^{\mathrm{ano}}_j \mathbf{f}_j}{1+\sum_{j=1}^{N} a^{\mathrm{ano}}_j} \!-\!  \frac{ \mathbf{f}^{\mathrm{ref}}+\sum_{j=1}^{N} a^{\mathrm{ref}}_j \mathbf{f}_j}{1+\sum_{j=1}^{N} a^{\mathrm{ref}}_j}\right\|^2  \notag\\
   \!& >\! \frac{\left\|\mathbf{f}^{\mathrm{ano}} - \mathbf{f}^{\mathrm{ref}} \right\|^2 + \left \| \sum_j (1 - \mathrm{e} ^ {-\frac{K^2\sigma^2}{\delta}}) a_j^{\mathrm{ref}} \mathbf{f}_j \right \|^2}{(1+\sum_j a_j^{\mathrm{ref}})^2}  \notag\\
       \!&   > \! \frac{\left\|\mathbf{f}^{\mathrm{ano}} - \mathbf{f}^{\mathrm{ref}} \right\|^2 \!+\! \left \| \sum_j (1 - \mathrm{e} ^ {-\frac{K^2\sigma^2}{\delta}}) a_{\mathrm{min}}^{\mathrm{ref}} \mathbf{f}_j \right \|^2}{(1+N)^2},\notag
\end{align}
where $a_{\mathrm{min}}^{\mathrm{ref}}$ is the $\mathbf{f}^{\mathrm{ref}}$'s minimum similarity to normal data samples, which is not infinitely small value with a high probability. Thus, when 
 \begin{equation}
    0< \delta < \frac{-K^2}{\log \left(1-\sqrt{\frac{(N^2+2N)(\|\boldsymbol{\mu}\|^2+\sigma^2)}{N^2(\|\boldsymbol{\mu}\|^2+K^2\sigma^2)}}\right)}
 \end{equation}
 holds, we have $\| \mathbf{f}_{\star \mathcal{\hat{G}}}^{\mathrm{ano}} - \mathbf{f}_{ \star \mathcal{\hat{G}}}^{\mathrm{ref}} \| < \| \mathbf{f}^{\mathrm{ano}} - \mathbf{f}^{\mathrm{ref}} \|$ hold.
 On the other hand, as message passing performs as a feature average, the feature variance will be reduced. Thus, Theorem \ref{theorem_1_copy} holds. The variance of $\sigma_{\star \mathcal{\hat G}}$ in Theorem \ref{theorem_2_copy} are further reduced, by moving all samples to $\boldsymbol{\mu}$, and thus Theorem \ref{theorem_2_copy} holds.
\end{proof}

\end{document}